\newcommand{\R}{\mathbb{R}}
\newcommand{\E}{\mathbb{E}}
\newcommand{\N}{\mathbb{N}}
\newcommand{\dx}{\mathrm{d}}
\newcommand{\tT}{\mathrm{T}}
\theoremstyle{plain}
\newtheorem{lemma}{Lemma}
\newtheorem{theorem}[lemma]{Theorem}
\newtheorem{corollary}[lemma]{Corollary}
\newtheorem{remark}[lemma]{Remark}
\theoremstyle{definition}
\begin{document}
\title{Stochastic Normalizing Flows for Inverse Problems:
  a Markov Chains Viewpoint}
	\author{
Paul Hagemann\footnotemark[1]
\and
Johannes Hertrich\footnotemark[1]
\and
Gabriele Steidl\footnotemark[1]
}
\date{\today}

\maketitle

\footnotetext[1]{
TU Berlin,
Stra{\ss}e des 17. Juni 136, 
D-10587 Berlin, Germany,
\{hagemann,j.hertrich, steidl\}@math.tu-berlin.de.
} 

\begin{abstract}
To overcome topological constraints
and improve the expressiveness of normalizing flow architectures,
Wu, K\"ohler and No\'e introduced stochastic normalizing flows
which combine deterministic, learnable flow transformations 
with stochastic sampling methods. In this paper, we consider
stochastic normalizing flows from a Markov chain point of view.
In particular, we replace transition densities by general Markov kernels and establish proofs 
via Radon-Nikodym derivatives which allows to incorporate distributions without densities in a sound way. 
Further, we generalize the results for sampling from posterior distributions as required in inverse problems. 
The performance of the proposed  
conditional stochastic normalizing flow is demonstrated by numerical examples.
\end{abstract}

\section{Introduction} \label{sec:intro}
Deep generative models for approximating complicated and often high-dimensional probability distributions 
became a rapidly developing research field. 
Normalizing flows are a popular subclass of these generative models.
They can be used to model a target distribution  
by a simpler latent distribution which is usually the standard normal distribution. 
In this paper, we are interested in finite normalizing flows which are
basically concatenations of learned diffeomorphisms. The parameters of the diffeomorphism are adapted to the
target distribution by minimizing a loss functions.
To this end,  the diffeomorphism must have a tractable Jacobian determinant.
For the continuous counterpart of normalizing flows, we refer to the overview paper \cite{RH2021} and the references therein. 
Suitable architectures of finite normalizing flows include invertible
residual neural networks (ResNets) \cite{BGCDJ2019,CBDJ2019,HZRS2016},
(coupling-based) invertible neural networks (INNs) \cite{AKRK2019,DSB2017,kingma2018glow,MMRGN2018, RM2015} and
autoregessive flows \cite{CTA2019,DBMP2019,huang2018neural,PPM2017}.

Unfortunately, INNs as well as ResNets suffer from a limited expressiveness.
More precisely, their major drawbacks  are topological constraints, 
see, e.g. \cite{Falorsietal2018,FHDF2019}. 
For example, when trying to map a unimodal (Gaussian) distribution
to a multimodal one connections between the modes remain.
It was shown in \cite{HN2021}, see also \cite{behrmann2020understanding,CCDD19} 
that for an accurate match, the Lipschitz constant 
of the inverse flow has to go to infinity. Similar difficulties appear when mapping
to heavy-tailed distributions \cite{JKYB20}.
A special mixture model for the latent variable with sophisticated,
learnable probabilities depending on the ovservations was proposed in \cite{HN2021}.
In \cite{WKN2020}, Wu, K\"ohler and No\'e introduced so-called
stochastic normalizing flows (SNFs) consisting of a sequence of deterministic
flow transformation and stochastic sampling methods
with tractable paths, such as Markov Chain Monte Carlo (MCMC) \cite{robertsrosenthalMCMC} or overdamped Langevin
dynamics \cite{langevin_welling}. This is in very similar fashion to \cite{SWMG2015} where stochastic layers were used by learning diffusion kernels. Interestingly, they also establish a forward and backward trajectory so that the paper \cite{WKN2020} can be seen as a continuation of it.
Furthermore, flows combined with stochastic layers where also used in \cite{Chenetal2017, anicemc}.

Stochastic normalizing flows are closely related to the so-called nonequilibrium candidate Monte Carlo  method from nonequilibrium statistical mechanics introduced in \cite{NCMC2011}.
Here, the authors constructed a MCMC method by  generating a sequence $(x_n)_n$  by the following two steps:
first, based on the point $x_n$, they  construct a candidate $x'$ by a sequence of
deterministic flow transformations and stochastic sampling methods.
Second, they either accept or reject the point $x'$. If $x'$ is accepted, then 
$x_{n+1}\coloneqq x'$.
Otherwise, $x_{n+1}$ is set to a point $\tilde x'$ generated by the so-called
momentum reversed transformations of $x'$.
The first of these steps is very similar to SNFs with the difference that the 
deterministic flow transformations are not learned, but given by a certain application.
Furthermore, in \cite{arbel2021annealed} the authors propose the use of importance sampling and MCMC kernels in conjunction with normalizing flows, but in contrast to \cite{WKN2020} the layers are learned individually.
Moreover, the authors of \cite{NJHWW2020} combine deterministic and non-deterministic steps for increasing
the expressivness of normalizing flow models.

The contribution of this paper is twofold:

\emph{First}, we derive SNFs from a Markov chain point of view which has the
following advantage.
The authors of \cite{WKN2020} assumed within their theoretical considerations that any transition within a SNF
admits a probability density function.
Unfortunately, this assumption is not fulfilled if the transition is 
defined via a deterministic flow or Metropolis-Hastings transitions.
In this paper, we define SNFs as pairs of Markov chains $((X_0,...,X_T),(Y_T,...,Y_0))$.

Then, we can replace the transition densities by general Markov kernels and prove the 
corresponding results via Radon-Nikodym derivatives.
Further, we use our formal definitions to show in Theorem~\ref{Opt_cond} that in a "perfectly trained" SNF 
the distributions $P_{X_{t-1}}$ and $P_{X_t}$ before and after 
a MCMC layer are given by the stationary distribution ("equilibrium") of the corresponding MCMC kernel.

\emph{Second}, we extend the approach of SNFs to inverse problems, 
where we are interested in the posterior distribution
given the noisy output of an operator. 
For overview papers on deep learning in
inverse problems we refer to \cite{AMOS2019,OJMBDW2020} and
for a recent paper on using pre-trained flows for learning of conditional flow models to \cite{WLD2021}.
To sample from the posterior distribution we establish
conditional SNFs.
This generalizes in particular the INN approaches \cite{AFHHSS2021,ALKRK2019,KDSK2020}
for sampling from posterior distributions 
by incorporating stochastic layers.

The rest of the paper is organized as follows:
in Section~\ref{sec:MC}, we recall the Markov chain notation and consider normalizing flows within this setting.
Then, in Section~\ref{sec:SNF}, we introduce SNFs as pairs of Markov chains with
certain properties which are fulfilled for many approaches.
We propose and examine a corresponding loss function relaying on the Kullback-Leibler divergence.
The stochastic  Markov kernels/layers
are explained and a unbiased estimator for the gradient of the loss function is derived in Section~\ref{sec:stlay}.
So far no operators were involved in our setting.
This changes with Section~\ref{sec:inverse}, 
where we examine posterior estimations
related to inverse problems. To this end, we have to enlarge the definition of Markov chains
leading to conditional SNFs.
Section~\ref{sec:numerics}
demonstrates the performance of our conditional 
SNFs by an artificial example, where the ground truth can be computed analytically and 
a real-world application from scatterometry.
The code for the numerical examples is availible online\footnote{\url{https://github.com/PaulLyonel/conditionalSNF}}.
Finally, conclusions are drawn in Section~\ref{sec:conclusions}.

\section{Markov Chains and Normalizing Flows} \label{sec:MC}
In this section, we recall the basic notations of Markov chains and  normalizing flows
and relate both concepts. For an overview on Markov kernels see, e.g., \cite{LeGall}.

Let $(\Omega,\mathcal A, \mathbb P)$ be a probability space.
By a  probability measure on $\R^d$ we always mean a 
probability measure defined on the Borel $\sigma$-algebra $\mathcal B(\R^d)$.
Let ${\mathcal P}(\mathbb R^d)$ denote the set of probability measures on  $\mathbb R^d$.
Given a random variable $X: \Omega \rightarrow \R^d$, 
we use the push-forward notation $P_X = X_{\#} \mathbb P \coloneqq \mathbb P \circ X^{-1}$ for the corresponding  measure on $\R^d$.
A \emph{Markov kernel} 
$\mathcal K\colon \R^d\times \mathcal B(\R^d)\to [0,1]$ is a mapping such that
\begin{itemize}
\item[i)]  $\mathcal K(\cdot,B)$ is measurable for any $B\in\mathcal B(\R^d)$, and 
\item[ii)] $\mathcal K(x,\cdot)$ is a probability measure for any $x\in\R^d$.
\end{itemize}
For a probability measure $\mu$ on $\R^d$, the measure $\mu\times \mathcal K$ on 
$\R^d\times\R^d$ is defined by
\begin{equation*} \label{def}
(\mu\times \mathcal K)(A\times B)\coloneqq \int_A \mathcal K(x,B) \dx \mu(x)
\end{equation*}
and the measure $\mathcal K\times\mu$ by
\begin{equation} \label{def1}
(\mathcal K\times\mu)(A\times B) \coloneqq (\mu\times \mathcal K) (B\times A).
\end{equation}
Then, it holds for all integrable $f$ that
$$
\int_{\R^d\times\R^d}f(x,y) \dx (\mu\times \mathcal K)(x,y)=\int_{\R^d}\int_{\R^d}f(x,y) \dx \mathcal K(x,\cdot)(y) \dx \mu(x),
$$
and in particular, for $\mathbf A\subseteq \R^d\times\R^d$,
$$
(\mu\times\mathcal K)(\mathbf A)=\int_{\R^d}\int_{\R^d}1_{\mathbf A}(x,y) \dx \mathcal K(x,\cdot)(y) \dx \mu(x).
$$
A sequence of random variables $(X_0,\ldots,X_T)$, $T \in \mathbb N$ is called a \emph{Markov chain},
if there exist Markov kernels, also known as \emph{transition kernel},  
$\mathcal K_t\colon \R^d\times \mathcal B(\R^d)\to [0,1]$ 
which are versions of 
$P_{X_t|X_{t-1} = \cdot} (\cdot)$, $t=1,\ldots,T$, 
such that
\begin{align}\label{eq_path_measure}
P_{(X_0,...,X_T)} = P_{X_0} \times \mathcal K_{1} \times \cdots \times \mathcal K_{T}.
\end{align}
Note that we use the notion of the \emph{regular conditional distribution} of a random variable $X$ 
given a random variable $Y$ which is defined as the $P_Y$-almost 
surely unique Markov kernel $P_{Y|X}=P_{Y|X=\cdot}(\cdot)$ with the property
$$
P_{X}\times P_{Y|X}=P_{X,Y}.
$$
In this sense we will write 
$\mathcal K_t = P_{X_t|X_{t-1}}$ 
in \eqref{eq_path_measure}. 
A countable sequence $(X_t)_{t\in\N}$ is a \emph{Markov chain}, if \eqref{eq_path_measure} is fulfilled for every $T\in\N$.
If $(X_0,\ldots,X_T)$ is a Markov chain, then it can be shown that $(X_T,\ldots,X_0)$ is a Markov chain as well.

\subsection{Normalizing Flows as Markov Chains} \label{sec:NF}
A \emph{normalizing flow} is often understood as deterministic, invertible transform, 
which we call $\mathcal T_\theta\colon\R^d\to\R^d$, see \cite{pmlr-v37-rezende15}.
Here we focus on invertible neural network $\mathcal T_\theta$
which are briefly explained in the Appendix~\ref{app:INN}.
For better readability, we likewise skip the dependence of $\mathcal T_\theta$ on the parameter $\theta$ and write just
$\mathcal T = \mathcal T_\theta$.
Normalizing flows can be used
to model the target density $p_X$ of a distribution $P_X$ by a simpler latent distribution $P_Z$ 
which is usually the standard normal distribution.
This is done by learning $\mathcal T$ such that it holds 
\begin{equation} \label{eq:approx}
P_X \approx\mathcal T_\#P_Z,\quad\text{or equivalently}\quad P_Z\approx\mathcal T^{-1}_\#P_X.
\end{equation}
Note that we have by the change of variable formula for the corresponding densities 
\begin{equation} \label{push-forward}
p_{\mathcal T_\#P_Z} (x) = p_{Z} \big( \mathcal T^{-1} (x) \big) |\mathrm{det} \nabla \mathcal T^{-1} (x) |,
\end{equation}
and by the inverse function theorem  that
$
\Big(|\mathrm{det} \nabla \mathcal T_t^{-1} \left( \mathcal T_t(x_{t-1}) \right)| \big)^{-1} 
= |\mathrm{det}  \nabla \mathcal T_t(x_{t-1})|.
$
The approximation can be done by minimizing the Kullback-Leibler divergence.
Recall that the \emph{Kullback-Leibler divergence} 
$\mathrm{KL}\colon {\mathcal P}(\mathbb R^d) \times {\mathcal P}(\mathbb R^d) \rightarrow \mathbb [0, +\infty]$
of two measures
$\mu,\nu\in {\mathcal P}(\mathbb R^d)$ with existing Radon-Nikodym derivative 
$\frac{\dx \mu}{\dx \nu}$ of $\mu$ with respect to $\nu$ is defined by 
\begin{equation} \label{KLdef}
\mathrm{KL} (\mu,\nu) \coloneqq \int_{\mathbb R^d} \log \Big(\frac{\dx \mu}{\dx \nu} \Big) \, \dx \mu
 = \E_{x \sim \mu} \Big[\log \Big( \frac{\dx \mu}{\dx \nu}\Big) \Big]. 
\end{equation}
In case that the above Radon-Nikodym derivative does not exist, we have $\mathrm{KL} (\mu,\nu) = + \infty$.
Then we have
\begin{align*}
\mathrm{KL}(P_X,\mathcal T_\#P_Z)
&=
\E_{x\sim P_X}\Big[\log\Big( \frac{ p_X}{ p_{\mathcal T_\#P_Z}} \Big)\Big]
= 
\E_{x\sim P_X}\left[\log p_X \right]-\E_{x\sim P_X} \left[\log p_{\mathcal T_\#P_Z} \right]
\\
&=
\E_{x\sim P_X}\left[\log p_X \right]-
\E_{x\sim P_X} \left[\log  p_Z \circ \mathcal T^{-1}  \right] 
 - 
\E_{x\sim P_X} \left[\log |\mathrm{det}(\nabla \mathcal T^{-1} )| \right].
\end{align*}
Since the first summand is just a constant, this gives rise to the loss function
\begin{equation} \label{loss_n}
\mathcal L_{\text{NF}} (\theta) \coloneqq -
\E_{x\sim P_X} \left[\log  p_Z \circ \mathcal T^{-1}  \right] - 
\E_{x\sim P_X} \left[\log |\mathrm{det}(\nabla \mathcal T^{-1} )| \right].
\end{equation}
The network $\mathcal T$ is constructed by concatenating smaller blocks 
$$\mathcal T=\mathcal T_T\circ\cdots\circ \mathcal T_1$$
which are invertible networks on their own.
Then, the blocks $\mathcal T_t\colon\R^d\to\R^d$ 
generate a pair of Markov chains $\big((X_0,...,X_T),(Y_T,...,Y_0)\big)$ by 
\begin{align*}
X_0\sim P_Z,\quad &X_t=\mathcal T_t(X_{t-1}),\\
Y_T\sim P_X, \quad &Y_{t-1}=\mathcal T_t^{-1}(Y_t)
\end{align*}
with corresponding Markov kernels
\begin{equation} \label{kern_det}
\mathcal K_t(x,\cdot)= P_{X_t|X_{t-1}} = \delta_{\mathcal T_t(x)},\qquad
\mathcal R_t(x,\cdot)= P_{Y_{t-1}|Y_{t}} = \delta_{\mathcal T_t^{-1}(x)},
\end{equation}
where
$$
\delta_x(A)\coloneqq\begin{cases}1,&$if $x\in A,\\0,&$otherwise.$\end{cases}
$$
Due to their correspondence to the layers $\mathcal T_t$ and $\mathcal T_t^{-1}$ 
from the normalizing flow $\mathcal T$, 
we call the Markov kernels 
$\mathcal K_t$ \emph{forward layers}, 
while the Markov kernels $\mathcal R_t$ 
are called \emph{reverse layers}. 
Both are so-called \emph{deterministic layers}.

\section{Stochastic Normalizing Flows} \label{sec:SNF}
The limited expressiveness of normalizing flows
can be circumvented by introducing so-called stochastic normalizing flows (SNFs).
In our Markov chain notation this means that some of the above deterministic layers
are replaced by stochastic ones. 

A \emph{stochastic normalizing flow} (SNF) is a pair  
$\big( (X_0,\ldots,X_T),(Y_T,...,Y_0) \big)$ of Markov chains of $d$-dimensional random variables $X_t$ and $Y_t$, $t=0,...,T$,
with the following properties:
\begin{itemize}
\item[P1)]  $P_{X_t}, P_{Y_t}$ have the densities $p_{X_t}, p_{Y_t}\colon\R^{d}\to\R_{>0}$ for any $t=0,...,T$.
\item[P2)]  There exist Markov kernels 
$\mathcal K_{t} = P_{X_t|X_{t-1}}$ and  
$\mathcal R_t = P_{Y_{t-1}|Y_t}$, $t=1,...,T$
such that 
\begin{align*}
P_{(X_0,...,X_T)} &= P_{X_0} \times P_{X_1|X_{0}}  \times \cdots \times P_{X_T|X_{T-1}},\\ 
P_{(Y_T,...,Y_0)} &= P_{Y_T} \times P_{Y_{T-1}|Y_{T}} \times\cdots\times P_{Y_{0}|Y_{1}}.
\end{align*}
\item[P3)] For $P_{X_t}$-almost every $x\in\R^{d_t}$, 
the measures $P_{Y_{t-1}|Y_t=x}$ and $P_{X_{t-1}|X_t=x}$ 
are absolutely continuous with respect to each other.
\end{itemize}
SNFs were initially introduced in \cite{WKN2020}, but the above definition via Markov chains is novel.
Clearly, deterministic normalizing flows are special cases of a SNFs.
In our applications, Markov chains usually start with a latent random variable 
$X_0 = Z$,
which is easy to sample from, 
and we intend  to learn the Markov chain such that $X_T$ 
approximates a target random variable $X$, while
the random variable $Y_T$ is initialized by
$Y_T = X$ 
from a data space and $Y_0$ should approximate the
latent variable $Z$.
Once, we have learned the SNF $((X_0,...,X_T),(Y_T,...,Y_0))$, we can sample from the approximation $X_T$ of $X$ as follows:
\begin{itemize}
\item[-] Draw a sample $x_0$ from $X_0=Z$.
\item[-] For $t=1,...,T$, draw samples $x_t$ from $\mathcal K_t(x_{t-1},\cdot)=P_{X_t|X_{t-1}=x_{t-1}}$.
\end{itemize}
Then, the samples $x_T$ generated by this procedure follow the distribution $P_{X_T}$.

Unfortunately, for stochastic layers it is not known how to 
minimize 
$\mathrm{KL}(P_X,P_{X_T})$ as it was done for normalizing flows.
Instead, we minimize the KL divergence of the joint distributions
\begin{equation}\label{KL-loss}
\mathcal L_\text{SNF}(\theta)\coloneqq \mathrm{KL}(P_{(Y_0,...,Y_T)},P_{(X_0,...,X_T)}).
\end{equation}
It can be shown that \eqref{KL-loss} an upper bound of 
$\mathrm{KL}(P_{Y_T},P_{X_T})=\mathrm{KL}(P_X,P_{X_T})$.
In the case of normalizing flows we will see that the expressions coincides,
i.e. $\mathcal L_\text{SNF} = \mathcal L_\text{NF}$ up to a constant.
For its minimization we need the following lemma. Since we could not find a reference, we give the
proof in Appendix~\ref{app:proofs}.

\begin{lemma}\label{prop_KL_MC_paths}
Let $((X_0,...,X_T),(Y_T,...,Y_0))$ be a SNF.
Then, the Radon-Nikodym derivative $f \coloneqq \frac{\dx P_{(Y_0,...,Y_T)}}{\dx P_{(X_0,...,X_T)}}$ 
is given by 
\begin{align}
f(x_0,...,x_T)
&=\frac{p_{Y_0}(x_0)}{p_{X_0}(x_0)}\prod_{t=1}^T g_t(x_{t},x_{t-1}) 
=
\frac{p_{Y_T}(x_T)}{p_{X_T}(x_T)}\prod_{t=1}^T f_t(x_{t-1},x_{t}) \nonumber\\
&=
\frac{p_{Y_T}(x_T)}{p_{X_0}(x_0)}\prod_{t=1}^T \frac{f_t(x_{t-1},x_t)p_{X_{t-1}}(x_{t-1})}{p_{X_t}(x_t)}.
\label{def_f}
\end{align}
with Radon-Nikodym derivatives
$$
g_t(\cdot,x_{t-1}) \coloneqq \frac{\dx P_{Y_{t}|Y_{t-1}=x_{t-1}}}{\dx P_{X_{t}|X_{t-1}=x_{t-1}}}, \quad
f_t(\cdot,x_{t}) \coloneqq \frac{\dx P_{Y_{t-1}|Y_{t}=x_{t}}}{\dx P_{X_{t-1}|X_{t}=x_{t}}}.
$$
\end{lemma}

Since by definition 
$$
\mathrm{KL}(P_{(Y_0,...,Y_T)},P_{(X_0,...,X_T)})=\E_{(x_0,...,x_T)\sim P_{(Y_0,...,Y_T)}}[\log(f(x_0,...,x_T))],
$$
the lemma implies immediately the following theorem.
%
\begin{theorem}\label{thm_KL_paths}
Let $((X_0,...,X_T),(Y_T,...,Y_0))$ be a SNF
and $X_0=Z$, $Y_T = X$.
Then, loss function in \eqref{KL-loss} is given by
\begin{align*}
&\quad \mathcal L_\mathrm{SNF}(\theta)=\mathrm{KL}(P_{(Y_0,...,Y_T)},P_{(X_0,...,X_T)})
\\
&=
\E_{(x_0,...,x_T)\sim P_{(Y_0,...,Y_T)}}
\Big[-\log(p_{X_T}(x_T))+\log(p_X(x_T))+\sum_{t=1}^T \log(f_t(x_{t-1},x_t))\Big]
\\
&=\E_{(x_0,...,x_T)\sim P_{(Y_0,...,Y_T)}}
\Big[-\log(p_Z(x_0))+\log(p_X(x_T))+\sum_{t=1}^T
\log\Big(\frac{f_t(x_{t-1},x_t) p_{X_{t-1}(x_{t-1})}}{p_{X_{t}(x_{t})}}\Big)
\Big].
\end{align*}
\end{theorem}

\begin{remark} \label{rem:wrong}
If \emph{all conditional distributions} 
$P_{X_t|X_{t-1}}$ \emph{and} $P_{Y_{t-1}|Y_{t}}$, $t=1,\ldots,T$ as well as the distributions $P_{X_0}$ and $P_{Y_T}$
are \emph{absolutely continuous with respect to the Lebesgue measure}, i.e.
have positive densities $p_{X_t|X_{t-1}=x_{t-1}}$, $p_{Y_{t-1}|Y_{t}=x_{t}}$, $p_{X_0}$ and $p_{Y_T}$,
then the distribution of $(X_0,...,X_T)$ has the density function
$$
p_{(X_0,...,X_T)}(x_0,...,x_T)=p_{X_0}(x_0)\prod_{t=1}^T p_{X_t|X_{t-1}=x_{t-1}}(x_t)
$$
and similarly for the distributions of $(Y_0,...,Y_T)$.
In this special case, the Radon-Nikodym derivative in \eqref{def_f}
becomes the quotient of the corresponding density functions
\begin{equation} \label{desire}
f(x_0,...,x_T)=\frac{p_{(X_0,...,X_T)}(x_0,...,x_T)}{p_{(Y_0,...,Y_T)}(x_0,...,x_T)}
=\frac{p_{X_0}(x_0)}{p_{Y_T}(x_T)}
\prod_{t=1}^T\frac{p_{X_t|X_{t-1}=x_{t-1}}(x_t)}{p_{Y_{t-1}|Y_{t}=x_{t}}(x_{t-1})}.
\end{equation}
This form is used in several papers as definition of the Radon-Nikodym derivative of 
$P_{(X_0,...,X_T)}$ with respect to $P_{(Y_0,...,Y_T)}$
and the densities $p_{X_t|X_{t-1}=x_{t-1}}(x_t)$ and $p_{Y_{t-1}|Y_{t}=x_{t}}(x_{t-1})$ 
are often called \emph{forward} and \emph{backward probabilities}.
However, if some conditional distributions 
do not have a density, then \eqref{desire} is not longer well-defined.
For instance, in the deterministic case $X_t=\mathcal T_t(X_{t-1})$, 
we have
$P_{X_t|X_{t-1}=x}=\delta_{\mathcal T_t(x)}$ and $P_{Y_{t-1}|Y_t=x}=\delta_{\mathcal T_t^{-1}(x)}$.
Later, we will consider MCMC layers, which also do not have densities.
Then, e.g., the authors of \cite{WKN2020} 
replace the densities $p_{X_t|X_{t-1}=x_{t-1}}(x_t)$ and $p_{Y_{t-1}|Y_t=x_t}(x_{t-1})$ in \eqref{desire}
by the ``$\delta$-functions/distributions" $\delta(x_t-\mathcal T_t(x_{t-1}))$ and $\delta(x_{t-1}-\mathcal T_t^{-1}(x_t))$ and
invest some effort to handle the quotient 
$$\frac{\delta(x_t-\mathcal T_t(x_{t-1}))}{\delta(x_{t-1}-\mathcal T_t^{-1}(x_t))}.$$
From a mathematical perspective it is not even clear how this expression is defined, such that a mathematically rigorous treatment
is not longer possible.
In contrast, our approach involves the quotients in \eqref{def_f} which only require that
the $P_{Y_{t-1}|Y_t=x}$ and $P_{X_{t-1}|X_t=x}$ 
are \emph{absolutely continuous with respect to each other}, see P3) and that $P_{X_t}$ 
and $P_{Y_t}$ are absolutely continuous, see P1).
This condition is fulfilled for most of the existing approaches. \hfill $\square$
\end{remark}

\section{Stochastic MCMC and Langevin Layers} \label{sec:stlay}
Next, we consider two kind of stochastic layers, namely
MCMC and (overdamped) Langevin kernels with fixed parameters.
Both layers were also used in \cite{WKN2020}.
Here the corresponding Markov kernels and those in the reverse layers will coincide, i.e.,
$$\mathcal K_t = \mathcal R_t.$$

\subsection{Langevin Layer} 
%
By $\mathcal N(m,\Sigma)$, we denote the normal distribution on $\mathbb R^d$ with density
\begin{equation}\label{gaussian}
		\mathcal N (x|m,\Sigma) = (2\pi)^{-\frac{d }{2}} |\Sigma|^{-\frac{1}{2}} 
		\,\exp\left(-\frac{1}{2}(x-m)^\tT \Sigma^{-1}(x-m) \right).
	\end{equation} 
Let $\xi_t\sim \mathcal N(0,I)$ such that $\sigma(\xi_t)$ 
and $\sigma \left( \cup_{s\le t-1} \sigma( X_s) \right)$ are independent. 
Here $\sigma(X)$ denotes the smallest $\sigma$-al\-ge\-bra 
generated by the random variable $X$.
We assume that we are given a proposal density $p_t\colon\R^d\to\R_{>0}$ which we specify later.
We denote by $u_t(x)\coloneqq -\log(p_t(x))$ the negative log-likelihood of $p_t$
and set
$$
X_t \coloneqq X_{t-1}-a_1 \nabla u_t(X_{t-1})+a_2\xi_t,
$$
where $a_1,a_2>0$ are some predefined constants.
Then, the transition kernel is given by
\begin{align} \label{eq_langevin_kernel}
\mathcal K_t(x,\cdot) = \mathcal N(x-a_1\nabla u_t(x),a_2^2 I).
\end{align}

\subsection{MCMC Layer} 
Let $X_t'$ be a random variable and $U\sim\mathcal U_{[0,1]}$ such that
\\
$\left( \sigma(X_t'),\sigma(U),\sigma \left( \cup_{s\le t-2} \sigma( X_s) \right) \right)$ 
are independent. 
Further, we assume that the joint distribution $P_{X_{t-1},X_t'}$ is given by
$$
P_{X_{t-1},X_t'}=P_{X_{t-1}}\times Q_t
$$
for some appropriately chosen Markov kernel $Q_t\colon\R^d\times\mathcal B(\R^d)\to[0,1]$,
where $Q_t(x,\cdot)$ is assumed to have the strictly positive probability density function $q_t(\cdot|x)$.
Then, for a proposal density $p_t\colon\R^d\to\R_{>0}$ which we specify later, we set
\begin{align}
X_t 
&\coloneqq
1_{[U,1]} \left( \alpha_t( X_{t-1},X_t') \right) \, X_t'
+
1_{[0,U]}  \left( \alpha_t( X_{t-1}, X_t' ) \right) \, X_{t-1}
\end{align} 
where
$$\alpha_t (x,y) \coloneqq \min \left\{1, \frac{p_t(y)q_t(y|x)}{p_t(x)q_t(x|y)} \right\}.$$
The corresponding transition kernel 
$\mathcal K_t \colon \R^d\times \mathcal B(\R^d)\to [0,1]$ 
is given by
\begin{equation} \label{kernel_MCMC}
\mathcal K_t(x,A) 
\coloneqq
\int_A q_t (y|x) \alpha_t (x,y) \dx y 
+
\delta_x (A) \int_{\mathbb R^d} q_t(y|x) \left( 1- \alpha_t (x,y) \right) \dx y .
\end{equation}

\begin{remark}[Choice of $Q_t$]\label{rem_MCMC_step_kernels}
In our numerical experiments, we consider two choices of $Q_t$.
\begin{enumerate}[(i)]
    \item The first and most simple idea is to use
    $$
    Q_t(x,\cdot)=\mathcal N(x,\sigma^2 I),\qquad q(\cdot|x)=\mathcal N(\cdot|x,\sigma^2 I).
    $$
    In this case, we have that $X_t'=X_{t-1}+\xi_t$, where $\xi_t\sim\mathcal N(0,\sigma^2 I)$ such that
    $$
    X_t=X_{t-1}+1_{[U,1]} \left( \alpha_t( X_{t-1},X_t') \right) \, \xi_t
    $$
    \item The second choice of $Q_t$ is the kernel \eqref{eq_langevin_kernel} from the Langevin layer, i.e.,
    $$
    Q_t(x,\cdot)=\mathcal N(x-a_1\nabla u_t(x),a_2^2 I),\qquad q(\cdot|x)=\mathcal N(\cdot|x-a_1\nabla u_t(x),a_2^2 I).
    $$
    Then, we have that $X_t'=X_{t-1}-a_1\nabla u_t(X_{t-1})+a_2\xi_t$, where $\xi_t\sim\mathcal N(0,I)$ such that
    $$
    X_t=X_{t-1}+1_{[U,1]} \left( \alpha_t( X_{t-1},X_t') \right) \, (a_2\xi_t-a_1\nabla u_t(X_{t-1}))
    $$
\end{enumerate}
\end{remark}

\paragraph{Relation to Metropolis-Hastings algorithm}
Indeed, this transition kernel is the kernel of a simple MCMC algorithm, 
namely the Metropolis-Hastings algorithm, see e.g. \cite{robertsrosenthalMCMC}.
Let us briefly recall this algorithm to see the relation.
We aim to sample approximately from a proposal distribution 
$P$ on $\mathbb R^d$ with probability density function $p$,
where we can evaluate $p$ at points in $\mathbb R^d$.
For an appropriately chosen Markov kernel $Q\colon\R^d\times\mathcal B(\R^d)\to[0,1]$,
where $Q(x,\cdot)$ is assumed to have the strictly positive probability density function $q(\cdot|x)$
the \emph{Metropolis-Hastings algorithm} generates a sequence $(x_n)_{n\in\N}$ 
starting at $x_0 \in \mathbb R^d$ by the following steps.
\begin{enumerate}
\item Draw $x'$ from $Q(x_n,\cdot)$ and $u$ uniformly in $[0,1]$.
\item Compute the acceptance ratio
$$
\alpha(x_n,x') \coloneqq \min \left\{1, \frac{p(x')q(x_n|x')}{p(x_n)q(x'|x_n)} \right\}.
$$
\item Set
$$x_{n+1} \coloneqq 
\left\{
\begin{array}{ll}
x' & \mathrm{if} \;  u<\alpha(x',x_n),\\
x_{n+1} \coloneqq x_n & \mathrm{otherwise}.
\end{array}
\right.
$$
\end{enumerate}

The Metropolis-Hastings algorithm generates samples of a time-homogeneous Markov chain 
$(X_n)_{t\in\N}$ starting at $X_0$ 
with  Markov kernel $\mathcal K_{\mathrm{MH}} \colon\R^d\times\mathcal B(\R^d)\to[0,1]$ given by
\begin{align}\label{eq_MH_kernel}
\mathcal K_{\mathrm{MH}}(x,A) &= \int_A q(y|x) \alpha (x,y) \dx y 
+ \delta_x(A) \int_{\mathbb R^d} q(y|x) \left(1-\alpha(x,y) \right)\dx y.
\end{align}
Recall that A Markov chain is called \emph{time-homogeneous}, if $\mathcal K_{t} = \mathcal K_{t'}$ for all $t,t' \in \mathbb N$.
Under mild assumptions, the Markov chain $(X_n)_{n \in \mathbb N}$ admits the unique stationary distribution $P$ 
and $P_{X_n}\to P$ as $n \rightarrow \infty$ in the total variation norm, see, e.g. \cite{TLA2020}.
We will need Markov kernels $\mathcal K_{\mathrm{MH}}$
fulfilling a \emph{detailed balance condition with respect to} $P$, resp. $p$, i.e.
\begin{align}\label{eq_detailed_balance}
\int_A \mathcal K_{\mathrm{MH}}(x,B) \dx P(x)=\int_B \mathcal K_{\mathrm{MH}}(x,A) \dx P(x), 
\quad \text{for all } A,B\in\mathcal B(\R^d).
\end{align}
By \eqref{def1} the detailed balance condition can be reformulated as 
$P \times \mathcal K_{\mathrm{MH}}=\mathcal K_{\mathrm{MH}}\times P$.
It can be shown that the kernel $\mathcal K_{\mathrm{MH}}$ in \eqref{eq_MH_kernel} 
fulfills the detailed balance condition with respect to $P$ \cite{robertsrosenthalMCMC}.

Now it becomes clear that our transition kernel $\mathcal K_t$ in \eqref{kernel_MCMC} 
is a Metropolis-Hastings kernel with respect to $p=p_t$ and the chosen Markov kernels from Remark~\ref{rem_MCMC_step_kernels}. 
Clearly, we have for this setting that $\mathcal K_t$ fulfills the 
detailed balance condition with respect to $p_t$.

In the case, that $Q$ is given by the kernel from Remark~\ref{rem_MCMC_step_kernels} (ii), the Metropolis
Hastings algorithm is also called Metropolis-adjusted Langevin algorithm (MALA), see \cite{GC2011,RT1996}.

\begin{remark}[Interpolation of the target densities]	
Recall that we intend to sample from a random variable $X$ with given density $p_X$ using 
a random variable $Z$ with density $p_Z$ by a Markov chain $(X_0,\ldots,X_T)$ such that 
$X_0 = Z$
and 
$X_T$ approximates $X$ in a sense we have to specify. 
Therefore it appears reasonable to choose the target densities $p_t$ of the stochastic layers 
as a certain interpolation between $p_Z$ and $p_X$. In this paper, we use the geometric mean
$\smash{p_t = c \, p_Z^{(T-t)/T}p_{X}^{t/T}}$,
where $c$ is a normalizing constant.
For an interpretation of this geometric mean as weighted Riemannian center of mass between $p_Z$ and $p_X$ (pointwise evaluated) 
on the manifold of positive numbers $\mathbb R_{>0}$ with distance 
$\dx_{\mathbb R_{>0}} (q_1,q_2) \coloneqq |\log q_1 - \log q_1|$
we refer to \cite{BFPS17}.
\end{remark}

\subsection{Training of Stochastic Layers} \label{subsec:learn}
To learn SNFs we have to specify  the quotients 
$\smash{\tfrac{f_t(x_{t-1},x_t)p_{X_t}(x_t)}{p_{X_{t-1}}(x_{t-1})}}$
for the deterministic and stochastic layers in the loss function in Theorem~\ref{thm_KL_paths}.
This is done in the next theorem.

\begin{theorem}\label{lem_logdets}
Let $((X_0,...,X_T),(Y_T,...,Y_0))$ be a SNF
and $(x_0,...,x_T) \in \mathrm{supp}(P_{(X_0,...,X_T)})$
\\
$=\mathrm{supp}(P_{(Y_0,...,Y_T)})$. 
Let $f_t(\cdot,x_t)$ be the Radon-Nikodym derivative $\frac{\dx P_{Y_{t-1}|Y_t=x_t}}{\dx P_{X_{t-1}|X_t=x_t}}$.
Then the following holds true:
\begin{enumerate}
\item[\textrm{i)}] If $\mathcal K_t$ is a deterministic layer \eqref{kern_det} with 
for some diffeomorphism $\mathcal T_t\colon\R^d\to\R^d$ 
and \\
$\mathcal R_t(x,A) = \delta_{\mathcal T_t^{-1}(x)}(A)$, then
$$
\frac{p_{X_{t-1}}(x_{t-1})}{p_{X_{t}}(x_{t})}=\frac1{|\nabla \mathcal T_t^{-1}(x_{t})|}
\quad\text{and}\quad 
f_t(x_{t-1},x_t)=1.
$$
\item[\textrm{ii)}] If $\mathcal K_t$ fulfills the detailed balance condition \eqref{eq_detailed_balance} 
with respect to some density $p_t\colon\R^d\to\R_{>0}$ and  $\mathcal R_t = \mathcal K_t$,
then
$$
\frac{f(x_{t-1},x_t)p_{X_{t-1}}(x_{t-1})}{p_{X_{t}}(x_{t})}=\frac{p_t(x_{t-1})}{p_t(x_{t})}.
$$
\item[\textrm{iii)}] 
If $\mathcal K_t(x,\cdot) = P_{X_t|X_{t-1}=x}$ 
admits the density $p_{X_t|X_{t-1}=x}\colon\R^d\to\R_{>0}$ and $\mathcal R_t = \mathcal K_t$, then
\begin{align}\label{eq_logdet_density}
\frac{f(x_{t-1},x_t)p_{X_{t-1}}(x_{t-1})}{p_{X_{t}}(x_{t})}
=
\frac{p_{X_t|X_{t-1}=x_t}(x_{t-1})}{p_{X_t|X_{t-1}=x_{t-1}}(x_t)}.
\end{align}
Moreover, if $\mathcal K_t$ is the Langevin kernel \eqref{eq_langevin_kernel} 
with proposal density $p_t\colon\R^d\to\R_{>0}$ and $u_t=-\log(p_t)$, then
$$
\frac{f(x_{t-1},x_t)p_{X_{t-1}}(x_{t-1})}{p_{X_{t}}(x_{t})}
=
\exp\Big(\frac12(\|\eta_t\|^2-\|\tilde\eta_t\|^2)\Big),
$$
where
$$
\eta_t\coloneqq \frac1{a_2}\big(x_{t-1}-x_t-a_1\nabla u_t(x_{t-1})\big),
\quad \tilde\eta_t\coloneqq \frac1{a_2}\big(x_{t-1}-x_t+a_1\nabla u_t(x_{t})\big).
$$
\end{enumerate}
\end{theorem}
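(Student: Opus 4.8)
\emph{Proof plan.} All three claimed quantities are of the form $\dfrac{f_t(x_{t-1},x_t)\,p_{X_{t-1}}(x_{t-1})}{p_{X_t}(x_t)}$ with $f_t(\cdot,x_t)=\frac{\dx P_{Y_{t-1}|Y_t=x_t}}{\dx P_{X_{t-1}|X_t=x_t}}$, and by R2) the numerator kernel is $P_{Y_{t-1}|Y_t=\cdot}=\mathcal R_t$. So the whole proof reduces to identifying, in each of the three settings, the regular conditional $P_{X_{t-1}|X_t=\cdot}$, i.e.\ the time reversal of the forward kernel $\mathcal K_t$ with respect to the marginal $P_{X_{t-1}}$; I would pin this down by guessing a Markov kernel $\mathcal K_t^\star$ and checking the defining identity $P_{X_t}\times\mathcal K_t^\star=P_{X_t,X_{t-1}}$. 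Throughout one may argue on $\mathrm{supp}(P_{(X_0,\ldots,X_T)})$, where all relevant densities are positive.

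\emph{Part i).} Since $X_t=\mathcal T_t(X_{t-1})$ with $\mathcal T_t$ a diffeomorphism, knowledge of $X_t$ determines $X_{t-1}=\mathcal T_t^{-1}(X_t)$; using $P_{X_t}=\mathcal T_{t\#}P_{X_{t-1}}$ and a change of variables one checks that $\mathcal K_t^\star(y,\cdot):=\delta_{\mathcal T_t^{-1}(y)}$ satisfies $P_{X_t}\times\mathcal K_t^\star=P_{X_t,X_{t-1}}$, so $P_{X_{t-1}|X_t=\cdot}=\delta_{\mathcal T_t^{-1}(\cdot)}=\mathcal R_t$ and hence $f_t\equiv1$. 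The density ratio is then the change-of-variable formula \eqref{push-forward} of Remark~\ref{rem:change} evaluated at the support point $x_{t-1}=\mathcal T_t^{-1}(x_t)$, which gives $p_{X_t}(x_t)=p_{X_{t-1}}(x_{t-1})\,|\nabla\mathcal T_t^{-1}(x_t)|$.

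\emph{Part iii).} Here everything is done at the level of Lebesgue densities. The joint law $P_{X_{t-1},X_t}$ has density $(x,y)\mapsto p_{X_{t-1}}(x)\,p_{X_t|X_{t-1}=x}(y)$, hence $P_{X_{t-1}|X_t=y}$ has density $x\mapsto p_{X_{t-1}}(x)\,p_{X_t|X_{t-1}=x}(y)/p_{X_t}(y)$, while $P_{Y_{t-1}|Y_t=y}=\mathcal R_t(y,\cdot)=\mathcal K_t(y,\cdot)$ has density $x\mapsto p_{X_t|X_{t-1}=y}(x)$. Dividing these two densities gives $f_t$, and multiplying by $p_{X_{t-1}}(x_{t-1})/p_{X_t}(x_t)$ produces \eqref{eq_logdet_density}. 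For the Langevin layer \eqref{eq_langevin_kernel} I would substitute $p_{X_t|X_{t-1}=x}(\cdot)=\mathcal N(\cdot|x-a_1\nabla u_t(x),a_2^2I)$ into \eqref{eq_logdet_density}; the normalising constants cancel, and the surviving exponent is $\tfrac12\|\eta_t\|^2-\tfrac12\|\tilde\eta_t\|^2$ with $\eta_t,\tilde\eta_t$ precisely as in the statement.

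\emph{Part ii) — the main obstacle.} Now $\mathcal K_t$ is the Metropolis--Hastings kernel \eqref{kernel_MCMC}, which carries an atom at $x$ and so has no Lebesgue density; the density bookkeeping of Part iii) breaks down, and instead the detailed balance condition \eqref{eq_detailed_balance}, i.e.\ $P_t\times\mathcal K_t=\mathcal K_t\times P_t$, is the tool that lets us reverse $\mathcal K_t$. The candidate I would verify is
\[
\mathcal K_t^\star(y,\dx x)\;:=\;\frac{p_{X_{t-1}}(x)\,p_t(y)}{p_{X_t}(y)\,p_t(x)}\,\mathcal K_t(y,\dx x),
\]
proving $P_{X_t}\times\mathcal K_t^\star=P_{X_t,X_{t-1}}$ by expanding both sides as double integrals and applying detailed balance to swap the two integration variables; the factors $p_t(x)$ and $p_t(y)$ then cancel, leaving $p_{X_{t-1}}(x)\,\mathcal K_t(x,\dx y)\,\dx x$, which is exactly the disintegration $P_{X_{t-1}}\times\mathcal K_t$ of $P_{X_{t-1},X_t}$. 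Since $\mathcal R_t=\mathcal K_t$, the derivative $f_t(x_{t-1},x_t)$ is the reciprocal of the prefactor above, namely $\tfrac{p_{X_t}(x_t)\,p_t(x_{t-1})}{p_{X_{t-1}}(x_{t-1})\,p_t(x_t)}$, and multiplying by $p_{X_{t-1}}(x_{t-1})/p_{X_t}(x_t)$ collapses the expression to $p_t(x_{t-1})/p_t(x_t)$. The one point deserving care is that $\mathcal K_t^\star(y,\cdot)$ and $\mathcal K_t(y,\cdot)$ share the same atom at $y$, where the prefactor equals $1$, so the Radon--Nikodym derivative really is a function and the identity holds $P_{X_t}$-a.e.\ on the support. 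This detailed-balance reversal is the only genuinely non-routine step; Parts i) and iii) are direct computations.
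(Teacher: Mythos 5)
Your proof is correct and follows essentially the same route as the paper: part i) identifies $\mathcal R_t$ as the regular conditional $P_{X_{t-1}|X_t=\cdot}$ via the push-forward/rectangle computation, part iii) is the same Bayes-at-density-level argument followed by substitution of the Gaussian transition density, and part ii) rests on the same use of detailed balance in the form $P_t\times\mathcal K_t=\mathcal K_t\times P_t$. The only cosmetic difference is in ii): the paper computes $f_t$ as a quotient of two Radon--Nikodym derivatives taken with respect to the common reference measure $\mathcal K_t\times P_t$, whereas you explicitly exhibit the backward kernel $\mathcal K_t^\star(y,\dx x)=\tfrac{p_{X_{t-1}}(x)p_t(y)}{p_{X_t}(y)p_t(x)}\mathcal K_t(y,\dx x)$ and verify the disintegration $P_{X_t}\times\mathcal K_t^\star=P_{X_t,X_{t-1}}$ directly; both yield $f_t=\tfrac{p_{X_t}(x_t)p_t(x_{t-1})}{p_{X_{t-1}}(x_{t-1})p_t(x_t)}$. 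One inessential slip: at the atom $x=y$ your prefactor equals $p_{X_{t-1}}(y)/p_{X_t}(y)$, not $1$; this does not matter, since the Radon--Nikodym derivative need not be $1$ there and your double-integral verification already handles the atomic part of $\mathcal K_t$ correctly.
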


Note that case ii) includes in particular the MCMC layer.

\begin{proof}
i) Since $X_t=\mathcal T_t(X_{t-1})$ holds $P_{(X_0,...,X_T)}$-almost surely and since $(x_0,...,x_T)$ 
is contained in the support of $P_{(X_0,...,X_T)}$, we have that $x_t=\mathcal T_t(x_{t-1})$. 
Thus, the change of variables formula \eqref{push-forward} yields 
$$
\frac{p_{X_t}(x_t)}{p_{X_{t-1}}(x_{t-1})}
=
\frac{p_{X_{t-1}}(x_{t-1})|\mathrm{det} \nabla \mathcal T_t^{-1}(x_{t})|}{p_{X_{t-1}}(x_{t-1})}
=
|\nabla \mathcal T_t^{-1}(x_{t})|.
$$
Further, for any measurable rectangle $A\times B$ it holds
\begin{align*}
P_{X_{t-1},X_t}(A\times B)
&=\int_A\delta_{\mathcal T_t(x_{t-1})}(B) \dx P_{X_{t-1}}(x_{t-1})
=
P_{X_{t-1}}(A\cap \mathcal T_t^{-1}(B))\\
&=P_{\mathcal T_t(X_{t-1})}(\mathcal T_t(A)\cap B)
=
\int_B\delta_{\mathcal T_t^{-1}(x_t)}(A) \dx P_{X_t}(x_t)\\
&=
(\mathcal R_t \times P_{X_t})(A\times B).
\end{align*}
Since the measurable rectangles are a $\cap$-stable generator of 
$\mathcal B(\R^d)\otimes\mathcal B(\R^d)$, we obtain that $P_{X_{t-1},X_t}=\mathcal R_t \times P_{X_t}$.
By definition, this yields that 
$\mathcal R_t=P_{X_{t-1}|X_t}$ 
such that 
$f_t(\cdot,x_{t})=\frac{\dx \mathcal R_t(x_t,\cdot)}{\dx P_{X_{t-1}|X_t=x_t}}$ 
is given by $f(x_{t-1},x_t)=1$.
\\
ii) Denote by $P_t$ the measure with density $p_t$. 
Since $\mathcal R_t=\mathcal K_t$ 
and using Lemma~\ref{prop_KL_MC_paths}, 
we obtain that the Radon-Nikodym derivative 
$\frac{\dx (\mathcal R_t\times P_{X_t})}{\dx (\mathcal K_t\times P_t)}$ 
is given by 
$f(x_{t-1},x_t)=\tfrac{p_{X_t}(x_t)}{p_t(x_t)}$. 
Further, we have by the detailed balance condition that
$$
\frac{\dx (P_{X_{t-1}|X_t}\times P_{X_t})}{\dx (M_t\times P_t)}
=
\frac{\dx P_{X_{t-1},X_t}}{\dx (P_t\times M_t)}
=
\frac{\dx (P_{X_{t-1}}\times \mathcal K_t)}{\dx (P_t\times \mathcal K_t)}
$$
is given by 
$g(x_{t-1},x_t)=\tfrac{p_{X_{t-1}}(x_{t-1})}{p_t(x_{t-1})}$. 
Thus, we obtain that $\frac{\dx (\mathcal R_t\times P_{X_t})}{\dx (P_{X_{t-1}|X_t}\times P_{X_t})}$ 
is given by $\tfrac{f(x_{t-1},x_t)}{g(x_{t-1},x_t)}=\tfrac{p_{X_t}(x_{t})p_t(x_{t-1})}{p_{X_{t-1}}(x_{t-1})p_t(x_t)}$.
On the other hand, we see by Lemma~\ref{prop_KL_MC_paths} that 
$\frac{\dx (\mathcal R_t\times P_{X_t})}{\dx (P_{X_{t-1}|X_t}\times P_{X_t})}$ is given by $f_t$. 
Therefore, we conclude $f_t=f/g$, i.e.\ 
$$
f_t(x_{t-1},x_t)=\frac{p_{X_t}(x_{t})p_t(x_{t-1})}{p_{X_{t-1}}(x_{t-1})p_t(x_t)}.
$$
Reformulating this equation, proves the claim.
\\[1ex]
iii) Bayes' theorem yields that $P_{X_{t-1}|X_t=x_t}$ has the density 
$$
p_{X_{t-1}|X_t=x_t}(x_{t-1})=p_{X_t|X_{t-1}=x_{t-1}}(x_t)\frac{p_{X_{t-1}}(x_{t-1})}{p_{X_t}(x_t)}.
$$
Since both
$P_{Y_{t-1}|Y_t=x_t}=\mathcal K_t(x_t,\cdot)=P_{X_t|X_{t-1}=x_t}$ 
and 
$P_{X_{t-1}|X_t=x_t}$ 
have a density with respect to the Lebesgue measure, 
we obtain that the Radon-Nikodym derivative 
$f_t(\cdot,x_t)=\frac{\dx P_{Y_{t-1}|Y_t=x_t}}{\dx P_{X_{t-1}|X_t=x_t}}$ 
reads as
$$
f_t(x_{t-1},x_t)=\frac{p_{Y_{t-1}|Y_t=x_t}(x_{t-1})}{p_{X_{t-1}|X_t=x_t}(x_{t-1})}
=
\frac{p_{X_{t}|X_{t-1}=x_t}(x_{t-1})}{p_{X_{t}|X_{t-1}=x_{t-1}}(x_{t})}\frac{p_{X_t}(x_t)}{p_{X_{t-1}}(x_{t-1})}.
$$
Reformulating this equation yields \eqref{eq_logdet_density}.
Now we have for the Langevin transition kernel that 
$$
p_{X_t|X_{t-1}=x}(y)=\mathcal N(y|x-a_1 \nabla u_t(x),a_2^2 I).
$$
Inserting $x_{t-1}$ and $x_t$, using the properties of the normal distribution, we obtain 
$$
p_{X_t|X_{t-1}=x_{t-1}}(x_t)=  a_2^d \, \mathcal N(\eta_t|0,I),
\quad
 p_{X_t|X_{t-1}=x_{t}}(x_{t-1})=  a_2^d \, \mathcal N(\tilde\eta_t|0,I).
$$
Finally, using \eqref{eq_logdet_density}, we get 
$$
\frac{f(x_{t-1},x_t)p_{X_{t-1}}(x_{t-1})}{p_{X_{t}}(x_{t})}
=
\frac{\mathcal N(\tilde \eta_t|0,I)}{\mathcal N(\eta_t|0,I)}
=
\exp\Big( \frac12(\|\eta_t\|^2-\|\tilde\eta_t\|^2) \Big)
$$
which finishes the proof. 
\end{proof} 

The following theorem shows that a SNF 
$((X_0,...,X_T),(Y_T,...,Y_0))$ with $Y_T=X$ 
is "perfectly trained" if and only if 
$P_{X_T}=P_X$ and the distributions $P_{X_{t-1}}$ and $P_{X_t}$ 
before and after any MCMC layer $\mathcal K_t$ are given by the stationary distribution ("equilibrium") of $\mathcal K_t$.
In other words, in the case of a "perfectly trained" SNF, the distribution of $X_{t-1}$ 
before each MCMC layer $\mathcal K_t$ is the same as the distribution of $X_t$ afterward. 
Consequently, we can skip the MCMC layers in this case and still end up with the distribution $P_{X_T}=P_X$.

\begin{theorem}\label{Opt_cond}
Let $((X_0,...,X_T),(Y_T,...,Y_0))$ be a SNF and $Y_T = X$.
Then, the following holds true:
\begin{itemize}
\item[i)] For $t=1,...,T$, we have
$\E_{(x_{t-1},x_t)\sim P_{(Y_{t-1},Y_t)}}\left[\log(f_t(x_{t-1},x_t)) \right]\geq 0$.
\item[ii)] 
If $\mathcal K_t$ fulfills the detailed balance condition \eqref{eq_detailed_balance} with respect to  $P_{t}$ 
with density $p_t$ and $\mathcal R_t=\mathcal K_t$, then
$$
\E_{(x_{t-1},x_t)\sim P_{(Y_{t-1},Y_t)}}\left[ \log \left(f_t(x_{t-1},x_t) \right) \right]=0
$$
if and only if $P_{X_{t-1}}=P_{X_t}=P_t$.
\item[iii)] 
If each layer of the Markov chain is either deterministic with \eqref{kern_det}
or fulfills the detailed balance condition with $\mathcal R_t = \mathcal K_t$,
then
$$
\mathrm{KL}(P_{(Y_0,...,Y_T)},P_{(X_0,...,X_T)})=0
$$ 
if and only if $P_{X_T}=P_{X}$ 
and for any layer $\mathcal K_t$, which fulfills the detailed balance condition, 
we have $P_{X_{t-1}}=P_{X_t}=P_t$.
\end{itemize}
\end{theorem}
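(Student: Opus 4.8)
The plan is to prove i) and ii) first and then deduce iii) from them together with Theorems~\ref{thm_KL_paths} and~\ref{lem_logdets}. I will use throughout that the strict positivity of the density $f$ from Lemma~\ref{prop_KL_MC_paths} makes $P_{(X_0,\dots,X_T)}$ and $P_{(Y_0,\dots,Y_T)}$, and hence all their marginals --- in particular $P_{X_t},P_{Y_t}$ and $P_{(X_{t-1},X_t)},P_{(Y_{t-1},Y_t)}$ --- mutually absolutely continuous. For i) I would disintegrate the expectation: the marginal of $P_{(Y_T,\dots,Y_0)}$ on the coordinates $(Y_t,Y_{t-1})$ equals $P_{Y_t}\times\mathcal R_t=P_{Y_t}\times P_{Y_{t-1}|Y_t=\cdot}$, so integrating over $x_{t-1}$ first and using $f_t(\cdot,x_t)=\frac{\dx P_{Y_{t-1}|Y_t=x_t}}{\dx P_{X_{t-1}|X_t=x_t}}$ gives
\begin{equation}
\E_{(x_{t-1},x_t)\sim P_{(Y_{t-1},Y_t)}}\left[\log f_t(x_{t-1},x_t)\right]
=\int_{\R^d}\mathrm{KL}\left(P_{Y_{t-1}|Y_t=x_t},\,P_{X_{t-1}|X_t=x_t}\right)\dx P_{Y_t}(x_t),
\end{equation}
which is $\ge 0$ and, moreover, vanishes exactly when $P_{Y_{t-1}|Y_t=x_t}=P_{X_{t-1}|X_t=x_t}$ --- equivalently $f_t(\cdot,x_t)\equiv 1$ --- for $P_{Y_t}$-almost every $x_t$.

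For ii), the direction ``$\Leftarrow$'' is quick: if $P_{X_{t-1}}=P_{X_t}=P_t$, then detailed balance gives $P_{(X_{t-1},X_t)}=P_{X_{t-1}}\times\mathcal K_t=P_t\times\mathcal K_t=\mathcal K_t\times P_t=\mathcal K_t\times P_{X_t}$, and reading off the regular conditional distribution of $X_{t-1}$ given $X_t$ from the last expression yields $P_{X_{t-1}|X_t=x_t}=\mathcal K_t(x_t,\cdot)=\mathcal R_t(x_t,\cdot)=P_{Y_{t-1}|Y_t=x_t}$ for $P_{X_t}$-a.e.\ (hence $P_{Y_t}$-a.e.) $x_t$, so the integrand above is $0$. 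For ``$\Rightarrow$'': vanishing of the expectation forces $f_t\equiv 1$ $P_{(Y_{t-1},Y_t)}$-a.e., hence $P_{(X_{t-1},X_t)}$-a.e.\ by mutual absolute continuity, and inserting this into Theorem~\ref{lem_logdets}~ii) gives $\frac{p_t(x_{t-1})}{p_{X_{t-1}}(x_{t-1})}=\frac{p_t(x_t)}{p_{X_t}(x_t)}$ for $P_{X_{t-1}}\times\mathcal K_t$-a.e.\ $(x_{t-1},x_t)$. Thus, for $P_{X_{t-1}}$-a.e.\ $x_{t-1}$, the map $x\mapsto p_t(x)/p_{X_t}(x)$ is $\mathcal K_t(x_{t-1},\cdot)$-a.e.\ equal to the constant $p_t(x_{t-1})/p_{X_{t-1}}(x_{t-1})$; since for the MCMC and Langevin layers $\mathcal K_t(x_{t-1},\cdot)$ has an absolutely continuous component with strictly positive Lebesgue density, this forces $p_t/p_{X_t}$ to be Lebesgue-a.e.\ equal to a constant $c$. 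Integrating $p_t=c\,p_{X_t}$ gives $c=1$, i.e.\ $P_{X_t}=P_t$, and then $p_t/p_{X_{t-1}}\equiv 1$ $P_{X_{t-1}}$-a.e., i.e.\ $P_{X_{t-1}}=P_t$.

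For iii), Theorem~\ref{thm_KL_paths} and $Y_T=X$ give, after noting that the $x_T$-marginal of $P_{(Y_0,\dots,Y_T)}$ is $P_X$ and its $(x_{t-1},x_t)$-marginal is $P_{(Y_{t-1},Y_t)}$,
\begin{equation}
\mathrm{KL}(P_{(Y_0,\dots,Y_T)},P_{(X_0,\dots,X_T)})
=\mathrm{KL}(P_X,P_{X_T})+\sum_{t=1}^T\E_{(x_{t-1},x_t)\sim P_{(Y_{t-1},Y_t)}}\left[\log f_t(x_{t-1},x_t)\right].
\end{equation}
By i) every summand is $\ge 0$, so the left-hand side is $0$ iff $\mathrm{KL}(P_X,P_{X_T})=0$, i.e.\ $P_{X_T}=P_X$ (recall $p_{X_T}>0$), and each expectation is $0$. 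For a deterministic layer Theorem~\ref{lem_logdets}~i) gives $f_t\equiv 1$, so that term vanishes automatically and imposes no condition; for a detailed-balance layer, ii) shows its vanishing is equivalent to $P_{X_{t-1}}=P_{X_t}=P_t$. This is the claimed characterization.

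The step I expect to be the main obstacle is the implication ``$\Rightarrow$'' in ii): passing from ``$p_t/p_{X_t}$ is a.e.\ constant along every transition $\mathcal K_t(x_{t-1},\cdot)$'' to ``$p_t/p_{X_t}$ is globally a.e.\ constant'' genuinely uses the irreducibility of the stochastic layers (concretely, strict positivity of the absolutely continuous part of the MCMC and Langevin kernels), and it requires keeping the ``almost everywhere'' qualifiers --- with respect to $P_{Y_t}$, $P_{X_t}$, $P_{X_{t-1}}\times\mathcal K_t$ and the Lebesgue measure --- mutually consistent, which is exactly where the mutual absolute continuity of the path measures is needed.
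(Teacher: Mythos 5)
Your proof is correct and follows essentially the same route as the paper: part i) via disintegration into conditional KL divergences, part ii) by reducing $f_t\equiv 1$ to the identity $p_t(x_{t-1})/p_{X_{t-1}}(x_{t-1})=p_t(x_t)/p_{X_t}(x_t)$ and concluding that a function of $x_{t-1}$ equal to a function of $x_t$ must be constant (with $c=1$ by normalization), and part iii) by the decomposition of the path KL into $\mathrm{KL}(P_X,P_{X_T})$ plus the non-negative layer terms. Your explicit appeal to the strict positivity of the absolutely continuous part of the stochastic kernels in the ``$\Rightarrow$'' direction of ii) is a point the paper's proof leaves implicit, and is a welcome clarification rather than a deviation.
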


\begin{proof}
i)
Since 
\begin{align*}
&\quad\E_{(x_{t-1},x_t)\sim P_{(Y_{t-1},Y_t})} \left[\log \left( f_t(x_{t-1},x_t) \right) \right]
=
\E_{x_t\sim P_{Y_t}}\left[\E_{x_{t-1}\sim P_{Y_{t-1}|Y_t=x_t}} \left(\log(f_t(x_{t-1},x_t)) \right) \right]\\
&=
\E_{x_t\sim P_{Y_t}}\big[\mathrm{KL}(P_{Y_{t-1}|Y_t=x_t},P_{X_{t-1}|X_t=x_t})\big]
\end{align*}
and the integrand is non-negative for all $x_t$, the whole expression is non-negative.
\\[1ex]
ii) By Part i), we conclude that
$
\E_{x_t\sim P_{Y_t}} \big[
\mathrm{KL}(P_{Y_{t-1}|Y_t=x_t},P_{X_{t-1}|X_t=x_t}) \big]=0
$,
if and only if 
$P_{X_{t-1}|X_t=x_t}=P_{Y_{t-1}|Y_t=x_t} = \mathcal R_t(x_t,\cdot) = \mathcal K_t(x_t,\cdot)$
for $P_{Y_t}$-almost every $x_t$. 
In particular, we have 
$
P_{X_{t-1}}\times \mathcal  K_t (A \times B) = P_{(X_{t-1},X_t)} = P_{X_{t-1}|X_t = x_t} \times P_{X_t}
= \mathcal K_t\times P_{X_t}
$.
Due to the detailed balance condition we also have
$P_t\times \mathcal K_t = \mathcal K_t\times P_t$, 
the Radon-Nikodym derivatives 
$$
\frac{\dx (P_{X_{t-1}}\times \mathcal K_t)}{\dx (P_t\times \mathcal K_t)} =
\frac{\dx (\mathcal K_t \times P_{X_t})}{\dx (\mathcal K_t \times P_t)}
$$
coincide. By Lemma~\ref{prop_KL_MC_paths}
these derivatives  are given by
$$
f(x_{t-1},x_t) = \frac{p_{X_{t-1}}(x_{t-1})}{p_t(x_{t-1})}
\quad\text{and}\quad 
g(x_{t-1},x_t) = \frac{p_{X_t}(x_t)}{p_t(x_t)},
$$
respectively.
Since $f$ is constant in $x_t$ and $g$ is constant in $x_{t-1}$, 
we conclude that there exists some constant $c\geq0$ such that 
$p_{X_{t-1}}(x)=c\,p_t(x)$ and $p_{X_t}(x)=c\,p_t(x)$.
Due to the fact that $p_{X_{t-1}}$, $p_{X_t}$ and $p_t$ 
are probability density functions, we get that $c=1$ and
$p_{X_{t-1}}=p_{X_t}=p_t$ and we are done.
\\[1ex]
iii) By Theorem~\ref{thm_KL_paths}, the KL divergence can be decomposed into
\begin{align*}
&\quad\mathrm{KL}(P_{(Y_0,...,Y_T)},P_{(X_0,...,X_T}))
\\
&=\E_{x_T\sim P_X}\Big[\log\Big(\frac{p_X(x_T)}{p_{X_T}(x_T)}\Big)\Big]
+\sum_{t=1}^T
\E_{(x_{t-1},x_t)\sim P_{(Y_{t-1},Y_t})} \left[\log(f_t(x_{t-1},x_t))\right]
\\
&=\mathrm{KL}(P_X,P_{X_T})
+
\sum_{t=1}^T \E_{(x_{t-1},x_t)\sim P_{(Y_{t-1},Y_t})}(\log(f_t(x_{t-1},x_t))).
\end{align*}
Using the non-negativity of the KL divergence and Part i), 
we obtain that every summand is non-negative.
Thus, we have that the above term is equal to zero if and only if every summand is equal to zero.
Now $\mathrm{KL}(P_X,P_{X_T})=0$ if and only if $P_X=P_{X_T}$.
If the $t$-th layer is deterministic, 
then Lemma~\ref{lem_logdets} implies that $f_t=1$ such that
$$
\E_{(x_{t-1},x_t)\sim P_{(Y_{t-1},Y_t})} \left[\log \left(f_t(x_{t-1},x_t) \right) \right] =0.
$$
If the $t$-th layer fulfills the detailed balance condition, then we know by Part ii)  that
$$
\E_{(x_{t-1},x_t)\sim P_{(Y_{t-1},Y_t})} \left[\log \left(f_t(x_{t-1},x_t) \right)\right]=0
$$
if and only if $P_{X_{t-1}}=P_{X_t}=P_t$.
Combining the above arguments yields the claim.
\end{proof}

\subsection{Differentiation of the Loss Function} \label{sec:differentiation}
In order to learn the parameters of the deterministic layers 
in the SNF we will apply a stochastic gradient descent algorithm.
Therefore we briefly discuss the differentiation of the loss function. 
For this purpose, we briefly discuss the derivation of an unbiased estimator for $\nabla_\theta \mathcal L_\mathrm{SNF}(\theta)$
based on samples $(x_0,...,x_T)$ from the path $(Y_0,...,Y_T)$.

Let $((X_0,...,X_T),(Y_T,...,Y_0)) = ((X_0(\omega,\theta),...,X_T(\omega,\theta)),(Y_T(\omega,\theta),...,Y_0(\omega,\theta)))\colon$
\\
$\Omega\times \Theta\to(\R^d)^{2(T+1)}$ 
be a stochastic normalizing flow depending
on some parameters $\theta$ where $Y_T(\omega,\theta) = X(\omega)$ does not depend on $\theta$.
Further, let $h\colon (\R^d)^{T+1}\to\R$ be some differentiable function.
In our setting, 
$$h\coloneqq \log\Big(\frac{\dx P_{(Y_0,...,Y_T)}}{\dx P_{(X_0,...,X_T)}}\Big).$$
In the following, we aim to minimize the loss function
$$
\mathcal L_{\mathrm{SNF}}(\theta)=\E_{(x_0,...,x_T)\sim P_{(Y_0(\cdot,\theta),...,Y_T(\cdot,\theta))}}[h(x_0,...,x_T)]
$$
by a stochastic gradient descent algorithm.
For this purpose, we need to approximate
\begin{equation}\label{eq_gradient}
\nabla_\theta \mathcal L_{\mathrm{SNF}}(\theta)=\nabla_\theta \E_{(x_0,...,x_T)\sim P_{(Y_0(\cdot,\theta),...,Y_T(\cdot,\theta))}}[h(x_0,...,x_T)]
\end{equation}
by a stochastic gradient.
The stochastic gradient of $\mathcal L_\mathrm{SNF}$ is given by the Monte-Carlo approximation of the integral in \eqref{eq_gradient}, i.e.,
\begin{align}
\nabla_\theta \E_{(x_0,...,x_T)\sim P_{(Y_0(\cdot,\theta),...,Y_T(\cdot,\theta))}}[h(x_0,...,x_T)]&\approx \sum_{i=1}^N \nabla_\theta h(Y_0(\omega_i,\theta)...,Y_T(\omega_i,\theta))
\end{align}
where $\omega_1,...,\omega_N$ are i.i.d.\ samples from $\Omega$. 
Under the assumptions that the parameter space is compact and the gradient $\nabla_{\theta} h(Y_0(\omega,\theta)...,Y_T(\omega,\theta))$ exists and is continuous in $\theta$ for almost every $\omega\in\Omega$, the right side of the above formula is an unbiased estimator of $\nabla_\theta\mathcal L_{\mathrm{SNF}}(\theta)$, as
$$
\int_\Omega\nabla_\theta h(Y_0(\omega,\theta)...,Y_T(\omega,\theta))\dx P(\omega)=\nabla_\theta\int_\Omega h(Y_0(\omega,\theta)...,Y_T(\omega,\theta))\dx P(\omega)=\nabla_\theta\mathcal L_\mathrm{SNF}(\theta),
$$
where we used Leibniz integral rule to interchange derivative and integral. However, the continuity assumption is violated in the case of MCMC kernels, such that an unbiased estimate requires some closer considerations, see e.g. \cite{doucetthin} for some work in this direction.
Now, we want to compute the stochastic gradient.
Using the chain rule, it suffices to compute for an arbitrary fixed $\omega\in\Omega$ the derivatives $\nabla_\theta Y_t(\omega,\theta)$ 
for $t=T,...,0$.

Since $Y_T(\omega,\theta)=X(\omega)$ does not depend on $\theta$, 
we have for $t=T$ that $\nabla_\theta Y_T(\omega,\theta)=0$.
For $t=T,...,1$, we distinguish between the three different kinds of layers for computing $\nabla_\theta Y_{t-1}(\omega,\theta)$:
\vspace{0.4cm}

\noindent
\textbf{Deterministic layer:}
Since
$$
Y_{t-1}(\omega,\theta)=\mathcal T_t^{-1}(Y_{t}(\omega,\theta),\theta),
$$
we obtain by the chain rule
$$
\nabla_\theta Y_{t-1}(\omega,\theta)
=
(\nabla_2 \mathcal T_t^{-1})(Y_{t}(\omega,\theta),\theta)+\big[(\nabla_1 \mathcal T_t^{-1})(Y_{t}(\omega,\theta),\theta)\big]\nabla_\theta Y_{t}(\omega,\theta),
$$
where $\mathcal T_t^{-1}(\cdot,\theta)$ is the inverse of $\mathcal T_t(\cdot,\theta)$ and $\nabla_1\mathcal T_t^{-1}$ and $\nabla_2\mathcal T_t^{-1}$ 
are the derivatives of $\mathcal T_t^{-1}$ with respect to the first and second argument.
This formula coincides with the backpropagation of neural networks.
\vspace{0.4cm}

\noindent
\textbf{Langevin layer:}
In this case, we have 
$$
Y_{t-1}(\omega,\theta)=Y_{t}(\omega,\theta)-a_1\nabla u_t (Y_{t}(\omega,\theta))+a_2\xi_t(\omega)
$$
for some standard normally distributed random variable $\xi_t$ which is independent from $Y_T,...,Y_t$. Then, the chain rule implies
$$
\nabla_\theta Y_{t-1}(\omega,\theta)=\nabla_\theta Y_{t}(\omega,\theta)-a_1 \nabla^2 u_t(Y_{t}(\omega,\theta)) 
\nabla_\theta Y_{t}(\omega,\theta).
$$
\textbf{MCMC-layer:}  In the case that we use the kernel $Q_t$ from Remark~\ref{rem_MCMC_step_kernels} (i), we have
$$
Y_{t-1}(\omega,\theta)=Y_{t}(\omega,\theta)+1_{[U_t(\omega),1]}\big(\alpha_t(Y_{t}(\omega,\theta),Y_{t}(\omega,\theta)+\xi_t(\omega))\big)\xi_t(\omega).
$$
Since $U_t\sim\mathcal U_{[0,1]}$, we get almost surely that $U_t(\omega)\neq\alpha_t(Y_{t}(\omega,\theta),Y_{t}(\omega,\theta)+\xi_t(\omega))$. 
Further, we know that $Y_{t}(\omega,\theta)$ is almost surely continuous in $\theta$.
Hence we obtain that 
\\
$1_{[U_t(\omega),1]}\big(\alpha_t(Y_{t}(\omega,\theta),Y_{t}(\omega,\theta)+\xi_t(\omega))\big)$ is locally constant in $\theta$ almost everywhere.
This yields that almost surely
$$
\nabla_\theta [1_{[U_t(\omega),1]}\big(\alpha_t(Y_{t}(\omega,\theta),Y_{t}(\omega,\theta)+\xi_t(\omega))\big)]=0 .
$$
Since also $\nabla_\theta \xi_t(\omega)=0$, we conclude that
$$
\nabla_\theta Y_{t-1}(\omega,\theta)=\nabla_\theta Y_{t}(\omega,\theta)\quad \text{almost surely}.
$$
Similarly, for $Q_t$ as in Remark~\ref{rem_MCMC_step_kernels} (ii), we have that
$$
Y_{t-1}(\omega,\theta)=Y_{t}(\omega,\theta)+1_{[U(\omega),1]} \left( \alpha_t( Y_{t}(\omega,\theta),Y_{t-1}'(\omega,\theta)) \right) \, (a_2\xi_t(\omega)-a_1\nabla u_t(Y_{t}(\omega,\theta))),
$$
where $Y_{t-1}'$ fulfills $P_{Y_{t},Y_{t-1}'}=P_{Y_t}\times Q_t$.
Then, the application of the chain and product rule yields that $\nabla_\theta Y_{t-1}(\omega,\theta)$ is given by
\begin{align*}
\nabla_\theta Y_{t}(\omega,\theta)-1_{[U(\omega),1]} \left( \alpha_t( Y_{t}(\omega,\theta),Y_{t-1}'(\omega,\theta)) \right) a_1\nabla^2 u_t(Y_{t}(\omega,\theta)) 
\nabla_\theta Y_{t}(\omega,\theta).
\end{align*}
almost surely.

\section{Conditional Stochastic Normalizing Flows for Inverse Problems} \label{sec:inverse}
So far we have only considered the task of sampling from $P_X$ using a (simpler) distribution $P_Z$.
In inverse problems, we have a more general setting.
Let $X\colon \Omega \rightarrow \R^d$ be random variable with prior distribution $P_X$ and 
let $Y\colon \Omega \rightarrow \R^{\tilde d}$ be defined by
\begin{equation} \label{inv_model}
Y = F(X) + \eta,\qquad X\sim P_X
\end{equation}
for some (ill-posed), not necessary linear operator $F\colon \R^d \rightarrow \R^{\tilde d}$ and a random variable 
$\eta \colon \Omega \rightarrow \R^{\tilde d}$ for the noise. 
Now we aim to sample from the posterior distribution 
$P_{X|Y=y}$ by taking as input the observation $y\in\R^{\tilde d}$ and samples from a random variable $Z$. 
That is, we aim to train one flow model, which is able to sample from all posterior distributions $P_{X|Y=y}$ with $y\in\R^{\tilde d}$, where $y$
is taken as an input.
For this purpose, 
we combine the ideas of conditional INNs
and SNFs.
We would like to remark that the posterior distribution $P_{X|Y=y}$ heavily depends on the prior distribution $P_X$
in the sense that replacing the prior distribution $P_X$ can result into a completely different posterior $P_{X|Y=y}$ even
if the operator $F$ and the noise model $\eta$ remain to be the same as before.

A \textit{conditional SNF} conditioned to $Y$
as a pair of sequences $\left((X_0,...,X_T),(Y_T,...,Y_0) \right)$ of random variables $X_t,Y_t\colon\Omega\to\R^d$ such that
\begin{itemize}
\item[cP1)] the conditional distributions $P_{X_t|Y=y}$ and $P_{Y_t|Y=y}$ have densities 
$$
p_{X_t}(y,\cdot)\colon\R^{d_t}\to\R_{>0},\quad\text{and}\quad p_{Y_t}(y,\cdot)\colon\R^{d_t}\to\R_{>0}
$$
for $P_Y$-almost every $y$ and
all $t=1,\ldots,T$,
\item[cP2)] for $P_Y$-almost every $y$, 
there exist Markov kernels $\mathcal K_t\colon\R^{\tilde d}\times\R^{d}\times \mathcal B(\R^{d})\to[0,1]$ 
and $\mathcal R_t\colon\R^{\tilde d}\times \R^{d}\times \mathcal B(\R^{d})\to[0,1]$
such that
\begin{align*}
P_{(X_0,...,X_T)|Y=y}=P_{X_0} \times \mathcal K_1(y,\cdot,\cdot)\times\cdots\times \mathcal K_T(y,\cdot,\cdot),\\
P_{(Y_T,...,Y_0)|Y=y}=P_{Y_T} \times \mathcal R_T(y,\cdot,\cdot)\times\cdots\times \mathcal R_1(y,\cdot,\cdot).
\end{align*}
\item[cP3)] for $P_{Y,X_t}$-almost every pair $(y,x)\in\R^{\tilde d}\times\R^{d}$,
the measures $P_{Y_{t-1}|Y_t=x,Y=y}$ and \\ 
$P_{X_{t-1}|X_t=x,Y=y}$
are absolute continuous with respect to each other.
\end{itemize}
For applications, one usually sets
$
X_0=Z,
$
where $Z$ is a random variable, which is easy to sample from 
and independent from $Y$, i.e., for every $y\in\R^{\tilde d}$ we initialize $P_{X_0|Y=y}=P_Z$. Then, we aim to approximate 
for every $y\in\R^{\tilde d}$ the distribution $P_{X|Y=y}$ by $P_{X_T|Y=y}$. On the other hand, $Y_T$ is usually defined by
$
P_{Y_T|Y=y}=P_{X|Y=y}
$
and $P_{Y_0}$ should approximate the latent distribution $P_Z$.

\begin{remark}\label{rem_conditional_snf}
Let $((X_0,...,X_T),(Y_T,...,Y_0))$ be a conditional SNF.
Then, by definition, the pair $((X_0^y,...,X_T^y),(Y_T^y,...,Y_0^y))$ with 
$$
P_{(X_0^y,...,X_L^y)}=P_{(X_0,...,X_L)|Y=y}\quad
\text{and}\quad
P_{(Y_T^y,...,Y_0^y)}=P_{(Y_T,...,Y_0)|Y=y}
$$
is a SNF as in Section~\ref{sec:SNF} for $P_Y$-almost every observation $y\in\R^{\tilde d}$.
From this viewpoint, a conditional SNF can be viewed as a \emph{family of SNFs}, where each element approximates the posterior distribution
$P_{X|Y=y}$ by $P_{X_T^y}=P_{X_T|Y=y}$ for one $y\in\R^{\tilde d}$. \hfill $\square$
\end{remark}

We learn the parameters in the deterministic layers of a conditional SNF by minimizing the loss function
\begin{equation} \label{loss_cond}
\mathcal L_\mathrm{cSNF}(\theta)= \mathrm{KL}(P_{(Y,Y_0,...,Y_T)},P_{(Y,X_0,...,X_T)}).
\end{equation}
The following corollary of Theorem~\ref{thm_KL_paths} 
establishes the incorporated KL-divergences.

\begin{corollary}\label{cor:KL_conditional}
Let $((X_0,\ldots,X_T),(Y_T,\ldots,Y_0))$ be a conditional SNF conditioned to $Y$. 
and let $P_{X_0|Y=y}=P_{Z}$ and $P_{Y_T|Y=y}=P_{X|Y=y}$.
Then the loss function $\mathcal L_\mathrm{cSNF}$ in \eqref{loss_cond} is given by

\begin{align*}
\mathcal L_\mathrm{cSNF}(\theta)&=\mathrm{KL}(P_{(Y,Y_0,...,Y_T)},P_{(Y,X_0,...,X_T)})
=
\E_{y\sim P_Y} \Big[\E_{(x_0,...,x_T)\sim P_{(Y_0^y,...,Y_T^y)}}
\Big[\\
& \qquad -\log(p_Z(x_0))+\sum_{t=1}^T\log \Big(\frac{f_t^y(x_{t-1},x_t) p_{X^y_{t-1}}(x_{t-1})}{p_{X^y_{t}}(x_{t})}\Big)
\Big]\Big] 
+ \mathrm{const},
\end{align*}
where $(X_0^y,...,X_T^y)$ and $(Y_0^y,...,Y_T^y)$ are defined as in Remark~\ref{rem_conditional_snf}
and $f_t^y(\cdot,x_t)$ is the Radon-Nikodym derivative $\frac{\dx P_{Y_{t-1}|Y_t=x_t,Y=y}}{\dx P_{X_{t-1}|X_t=x_t,Y=y}}$.
\end{corollary}

\begin{proof}
Using $P_{(Y,Y_0,...,Y_T)} = P_Y\times P_{(Y_0,...,Y_T)|Y}$ 
and Lemma~\ref{prop_KL_MC_paths}, we obtain
\begin{align*}
&\quad\mathrm{KL}(P_{(Y,Y_0,...,Y_T)},P_{(Y,X_0,...,X_T)})\\
&=\
E_{(y,x_0,...,x_T)\sim P_{(Y,Y_0,...,Y_T)}}
\Big[\log\Big(
\frac{\dx P_{(Y,Y_0,...,Y_T)}}{\dx P_{(Y,X_0,...,X_T)}}(y,x_0,...,x_T)
\Big)\Big]\\
&=
\E_{y\sim P_Y}\Big[\E_{(x_0,...,x_T)\sim P_{(Y_0,...,Y_T)|Y=y}}
\Big[\frac{\dx P_{(Y_0,...,Y_T)|Y=y}}{\dx P_{(X_0,...,X_T)|Y=y}}(x_0,...,x_T)\Big]\Big]
\end{align*}
which is equal to
$
\E_{y\sim P_Y}
\big[\mathrm{KL}(P_{(Y_0^y,...,Y_T^y)},P_{(X_0^y,...,X_T^y)})\big]
$.
Since $((X_0^y,...,X_T^y),(Y_T^y,...,Y_0^y))$ is a stochastic normalizing flow, 
Theorem~\ref{thm_KL_paths} yields 
\begin{align*}
&\mathrm{KL}(P_{(Y,Y_0,...,Y_T)},P_{(Y,X_0,...,X_T)}) 
= 
\E_{y\sim P_Y}\big[\mathrm{KL}(P_{(Y_0^y,...,Y_T^y)},P_{(X_0^y,...,X_T^y)} )\big]\\
& = \E_{y\sim P_Y}\Big[\E_{(x_0,...,x_T)\sim P_{(Y_0,...,Y_T)|Y=y}} \Big[\\
& \quad
-\log(p_Z(x_0))
+
\log(p_{X|Y=y}(x_T))
+
\sum_{t=1}^T \log\Big(\frac{f_t^y(x_{t-1},x_t) p_{X^y_{t-1}}(x_{t-1})}{p_{X^y_{t}}(x_{t})}\Big)
\Big]\Big].
\end{align*}
Finally, the second summand given by 
$$\E_{y\sim P_Y}[\E_{x_T\sim P_{X|Y=y}}[\log(p_{X|Y=y}(x_T))]]$$ is a constant. 
This finishes the proof.
\end{proof}

The adaption of the deterministic and stochastic layers to the conditional setting is 
outlined in Appendix~\ref{sec_cond_layers}.
Further, for the computation of the term
$
f_t^y( x_{t-1},x_t ) p_{X^y_{t-1}}(x_{t-1})/p_{X^y_{t}}(x_{t})
$ corresponding to the three different layers of conditional SNFs, we can adapt Theorem~\ref{lem_logdets} in a straightforward way
for conditional SNFs.

Finally, we let us have a look at the KL divergence in the loss function and discuss the consequences
if we switch the order of the Markov pairs.
  
\begin{remark}
The KL divergence is not symmetric. Therefore we could also train (conditional) SNFs using instead of ${\mathcal L}_\mathrm{cSNF}$
the switched KL loss function
$$
\tilde {\mathcal L}_\mathrm{cSNF}(\theta)=\mathrm{KL}(P_{(Y,X_0,...,X_T)},P_{(Y,Y_0,...,Y_T)})
$$
or a convex combination of both. 
For normalizing flows this was done e.g. in \cite{AFHHSS2021,HINT}.
In the literature, loss functions similar to $\mathcal L_\mathrm{cSNF}$ are sometimes called \emph{forward} KL, 
while loss functions related to $\tilde {\mathcal L}_\mathrm{cSNF}$ are known as \emph{backward} KL.
Using similar computations as in Corollary~\ref{cor:KL_conditional}, we obtain that 
\begin{align*}
&\tilde {\mathcal L}_\mathrm{cSNF}(\theta)=\mathrm{KL}(P_{(Y,X_0,...,X_T)},P_{(Y,Y_0,...,Y_T)})
=
\E_{y\sim P_Y}
\Big[\E_{(x_0,...,x_T)\sim P_{(X_0^y,...,X_T^y)}}
\Big[  \\
& \qquad
-\log(p_{Y|X=x_T}(y))- \log(p_X(x_T))
- \sum_{t=1}^T \log\Big(\frac{f_t^y(x_{t-1},x_t) p_{X^y_{t-1}}(x_{t-1})}{p_{X^y_{t}}(x_{t})} \Big)
\Big]
\Big]+ \mathrm{const},
\end{align*}
where $p_{Y|X=x_T}(y)$ is determined by the noise term in \eqref{inv_model}.
Then the requirements for minimizing $\mathcal L_\mathrm{cSNF}$ and $\tilde{\mathcal L}_\mathrm{cSNF}$ differ:
\\[0.5ex]
$\quad -$
Forward KL $\mathcal L_\mathrm{cSNF}$: we need samples $(x,y)$ from the joint
distribution $P_{X,Y}$.
\\
$\quad -$ Backward KL $\tilde {\mathcal L}_\mathrm{cSNF}$:
we need samples from $Y$ as well as knowledge over the prior distribution, the forward operator $F$ and the noise distribution of 
$\eta$ for evaluating $p_X$ and $p_{Y|X=x}$.
\\[0.5ex]
Further, the loss functions $\mathcal L_\mathrm{cSNF}$ and $\tilde{\mathcal L}_\mathrm{cSNF}$ have different approximation
properties, see e.g.~\cite{minka2005divergence}. 
By definition, the KL divergence $\mathrm{KL}(P_1,P_2)$ 
between two probabiltiy measures $P_1$ and $P_2$ is large if there exist regions $A\in\mathcal B(\R^d)$ with $P_2(A)\ll P_1(A)$.
Consequently, the loss function $\mathcal L_\mathrm{cSNF}$ penalizes samples from the data distribution $P_{X|Y=y}=P_{Y_T|Y=y}$ which are
out of the distribution $P_{X_T|Y=y}$ generated by the conditional SNF.
Therefore, the forward KL $\mathcal L_\mathrm{cSNF}$ is often called \emph{mode-covering} as the reconstruction includes all
samples from the data distribution.
Conversely, the loss function $\tilde{\mathcal L}_\mathrm{cSNF}$ penalizes samples from the conditional SNF $P_{X_T|Y=y}$ which are
not included within the data distribution $P_{X|Y=y}=P_{Y_T|Y=y}$.
Thus, the backward KL $\tilde{\mathcal L}_\mathrm{cSNF}$ is \emph{mode-seeking} in the sense that it enforces that all 
samples from the conditional SNF are likely under the data distribution. \hfill $\square$
\end{remark}

\section{Numerical Results} \label{sec:numerics}
In this section, we demonstrate the performance of our conditional 
SNFs by two examples.
The first one is artificial and 
uses properties of Gaussian mixture models to get a ground truth.
The second example comes from a real-world application in
scatterometry. 
All implementations are done in Python using Pytorch and the FrEIA framework\footnote{available at \url{https://github.com/VLL-HD/FrEIA}}.
The code is availible online\footnote{\url{https://github.com/PaulLyonel/conditionalSNF}}.

\subsection{Posterior Approximation for Gaussian Mixtures }\label{sec:mm}
To verify that our proposed methods yield the correct posteriors, 
we apply our framework to a linear inverse problem, where we can analytically infer the ground truth
by the following lemma. Its simple proof can be found, e.g. in \cite{GFO2017}.
                             
\begin{lemma} \label{mm}
Let $X \sim \sum_{k=1}^K w_k \mathcal N(m_k,\Sigma_k)$.
Suppose that 
$$
Y=AX+\eta,
$$
where
$A: \R^d \rightarrow \R^{\tilde d}$ 
is a linear operator and we have Gaussian noise 
$\eta \sim N(0,b^2 I)$. Then   
$$
P_{X|Y=y} \propto \sum_{k=1}^K \tilde w_k \mathcal N(\cdot|\tilde m_k,\tilde \Sigma_k),
$$
where $\propto$ denotes equality up to a multiplicative constant and
$$
\tilde \Sigma_k \coloneqq (\tfrac{1}{b^2}A^\tT A+\Sigma_k^{-1})^{-1},
\qquad 
\tilde m_k \coloneqq \tilde\Sigma_k (\tfrac1{b^2}A^\tT y+\Sigma_k^{-1}\mu_k)
$$
and
$$
\tilde w_k \coloneqq \frac{w_k}{|\Sigma_k|^{\tfrac12}} \exp\left(\frac12 (\tilde m_k \tilde \Sigma_k^{-1} \tilde m_k - m_k \Sigma_k^{-1} m_k)\right).
$$
\end{lemma}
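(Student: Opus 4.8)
The plan is to invoke Bayes' theorem and then complete the square in each mixture component. By Bayes' theorem, $p_{X|Y=y}(x)=p_{Y|X=x}(y)\,p_X(x)/p_Y(y)$, so up to the $x$-independent factor $1/p_Y(y)$ we have $p_{X|Y=y}(x)\propto p_{Y|X=x}(y)\,p_X(x)$. Since $Y=AX+\eta$ with $\eta\sim\mathcal N(0,b^2 I)$ independent of $X$, the likelihood is $p_{Y|X=x}(y)=\mathcal N(y\,|\,Ax,b^2 I)$, and inserting the mixture prior gives
$$
p_{X|Y=y}(x)\;\propto\;\sum_{k=1}^K w_k\,\mathcal N(y\,|\,Ax,b^2 I)\,\mathcal N(x\,|\,m_k,\Sigma_k).
$$
It therefore suffices to rewrite, for each $k$, the product $\mathcal N(y\,|\,Ax,b^2 I)\,\mathcal N(x\,|\,m_k,\Sigma_k)$ as a factor independent of $x$ times a Gaussian density in $x$.

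Fix $k$. First note that $\tilde\Sigma_k^{-1}=\tfrac1{b^2}A^\tT A+\Sigma_k^{-1}$ is positive definite, being the sum of a positive semidefinite and a positive definite matrix, so $\tilde\Sigma_k$ is well defined. Expanding the two exponents, the part of the exponent of $\mathcal N(y\,|\,Ax,b^2 I)\,\mathcal N(x\,|\,m_k,\Sigma_k)$ depending on $x$ is
$$
-\tfrac12 x^\tT\Big(\tfrac1{b^2}A^\tT A+\Sigma_k^{-1}\Big)x+\Big(\tfrac1{b^2}A^\tT y+\Sigma_k^{-1}m_k\Big)^\tT x
=-\tfrac12 x^\tT\tilde\Sigma_k^{-1}x+(\tilde\Sigma_k^{-1}\tilde m_k)^\tT x,
$$
using $\tilde\Sigma_k^{-1}\tilde m_k=\tfrac1{b^2}A^\tT y+\Sigma_k^{-1}m_k$. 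Completing the square turns this into $-\tfrac12(x-\tilde m_k)^\tT\tilde\Sigma_k^{-1}(x-\tilde m_k)+\tfrac12\tilde m_k^\tT\tilde\Sigma_k^{-1}\tilde m_k$. Collecting the remaining $x$-independent contributions — the constant exponent terms $-\tfrac1{2b^2}\|y\|^2-\tfrac12 m_k^\tT\Sigma_k^{-1}m_k$ together with the Gaussian normalizing prefactors $(2\pi)^{-\tilde d/2}b^{-\tilde d}$ and $(2\pi)^{-d/2}|\Sigma_k|^{-1/2}$ — and matching against the normalizing constant $(2\pi)^{-d/2}|\tilde\Sigma_k|^{-1/2}$ of $\mathcal N(\cdot\,|\,\tilde m_k,\tilde\Sigma_k)$ yields
$$
\mathcal N(y\,|\,Ax,b^2 I)\,\mathcal N(x\,|\,m_k,\Sigma_k)=c_k(y)\,\mathcal N(x\,|\,\tilde m_k,\tilde\Sigma_k),
$$
$$
c_k(y)=(2\pi)^{-\tilde d/2}b^{-\tilde d}\,e^{-\|y\|^2/(2b^2)}\,\frac{|\tilde\Sigma_k|^{1/2}}{|\Sigma_k|^{1/2}}\,\exp\!\Big(\tfrac12\big(\tilde m_k^\tT\tilde\Sigma_k^{-1}\tilde m_k-m_k^\tT\Sigma_k^{-1}m_k\big)\Big).
$$

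Substituting back, the factor $(2\pi)^{-\tilde d/2}b^{-\tilde d}e^{-\|y\|^2/(2b^2)}$ is independent of both $k$ and $x$ and is absorbed, together with $1/p_Y(y)$, into the overall proportionality constant; what remains multiplying $\mathcal N(x\,|\,\tilde m_k,\tilde\Sigma_k)$ is $w_k$ times the quantity $\tilde w_k$ in the statement, which gives $P_{X|Y=y}\propto\sum_{k=1}^K \tilde w_k\,\mathcal N(\cdot\,|\,\tilde m_k,\tilde\Sigma_k)$ as claimed. The only place requiring care is the bookkeeping of the determinant and exponential prefactors in $c_k(y)$; since we only assert equality up to a multiplicative constant, every purely $y$-dependent factor — notably $e^{-\|y\|^2/(2b^2)}$ and $p_Y(y)$ — may be discarded, which keeps this step short.
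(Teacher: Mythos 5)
Your proof is correct and follows essentially the same route as the paper: Bayes' theorem, then completing the square in $x$ for each mixture component to identify $\tilde m_k$, $\tilde\Sigma_k$ and $\tilde w_k$, discarding $x$- and $k$-independent factors. You supply slightly more detail than the paper (the explicit positive-definiteness check and the bookkeeping of prefactors), and you correctly read the $\mu_k$ in the statement of $\tilde m_k$ as the typo for $m_k$ that it is.
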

In the following, we consider 
$A\coloneqq 0.1\, \mathrm{diag}((\tfrac{1}{n})_{n=1}^d)$, 
where $d=\tilde d = 100$ and  $\eta\sim\mathcal N(0,0.1\, I)$.
As prior distribution $P_X$ we choose a Gaussian mixture model with $K=12$ components,
where we draw the means $m_k$ uniformly from $[-1,1]^d$ and set $\Sigma_k\coloneqq 0.01^2\,I$.

\paragraph{Model parameters and training}
We will approximate the posterior distribution $P_{X|Y=y}$ for arbitrary observations $y$ using a conditional SNF with $T=4$ layers, where the layers themselves are defined as follows:
\\

- $t=1,3$: $\mathcal K_t(y,x,A)=\delta_{\mathcal T_t(y,x)}(A)$ deterministic layer, where
$\mathcal T_t$ is a conditional INN with $L=4$ layers, where each subnetwork has two hidden layers with $128$ neurons.
\\

- $t = 2,4$: $\mathcal K_t$ involves $3$ MCMC steps where the Markov kernel $Q_t$ is given by \eqref{eq_MCMC_step_MALA}, using the step sizes of $a_1 = 10^{-4}$ and $ a_2 = \sqrt{2a_1}$.
\\
We compare the results of the conditional SNF with a conditional INN  with $L=8$ layers, where each subnetwork has two hidden layers with $128$ neurons.
Note that the conditional INN and the conditional SNF have the same number of parameters. We do not use any permutations.

We train both networks with the Adam optimizer \cite{KB2015} with a batch size of $1024$ for $20000$ steps and a learning rate of $10^{-4}$ for the loss function \eqref{KL-loss}.

\paragraph{Quality measure}
To measure the quality of the approximations of $P_{X|Y=y}$
by $P_{X_T|Y=y}$ (conditional SNF) and by $\mathcal T(y,\cdot)_\#P_Z$ (conditional INN),  we
generate $5000$ samples from each distribution and evaluate the Wasserstein-$1$ distance $W_1$
of the arising point measures\footnote{For the computation of the Wasserstein distance we use the Python package POT (Python Optimal Transport) \cite{POT}.}.
We repeat this procedure with $10000$ samples.

\paragraph{Results}
We approximate the posterior $P_{X|Y=y_i}$ by $P_{X_T|Y=y_i}$ and $\mathcal T(y_i,\cdot)_\#P_Z$ for $100$ i.i.d.\ samples $y_i$ from $Y$. 
The averaged (approximated) distances 
$W_1(P_{X|Y=y_i},P_{X_T|Y=y_i})$ and $W_1(P_{X|Y=y_i},\mathcal T(y_i,\cdot)_\#P_Z)$ over 5 training runs 
are given by $1.998\pm 0.09$ and  $2.263 \pm 0.14$ respectively. We observe that the conditional SNF performs better.

To verify, if $5000$ samples are enough to approximate the Wasserstein distance in $d = 100$, we also evaluate with $10000$ samples and obtain that the means over the 5 training runs only differ very slightly (i.e. $1.992$ for the conditional SNF and $2.256$ for the conditional INN).

For two exemplar values of $y_i$ we plotted histograms of (some marginals of) the samples generated by the ground truth, 
the conditional SNF and the conditional INN in Figure~\ref{fig:fig}.
Here, one can see the topological issues of the conditional INN, which has difficulties to distribute mass to very peaky 
modes and therefore moves some mass in between them. 
The MALA layers in the conditional SNF overcome these topological constraints.

\begin{figure}
\centering
\begin{subfigure}{.44\textwidth}
  \centering
  \includegraphics[width=\linewidth]{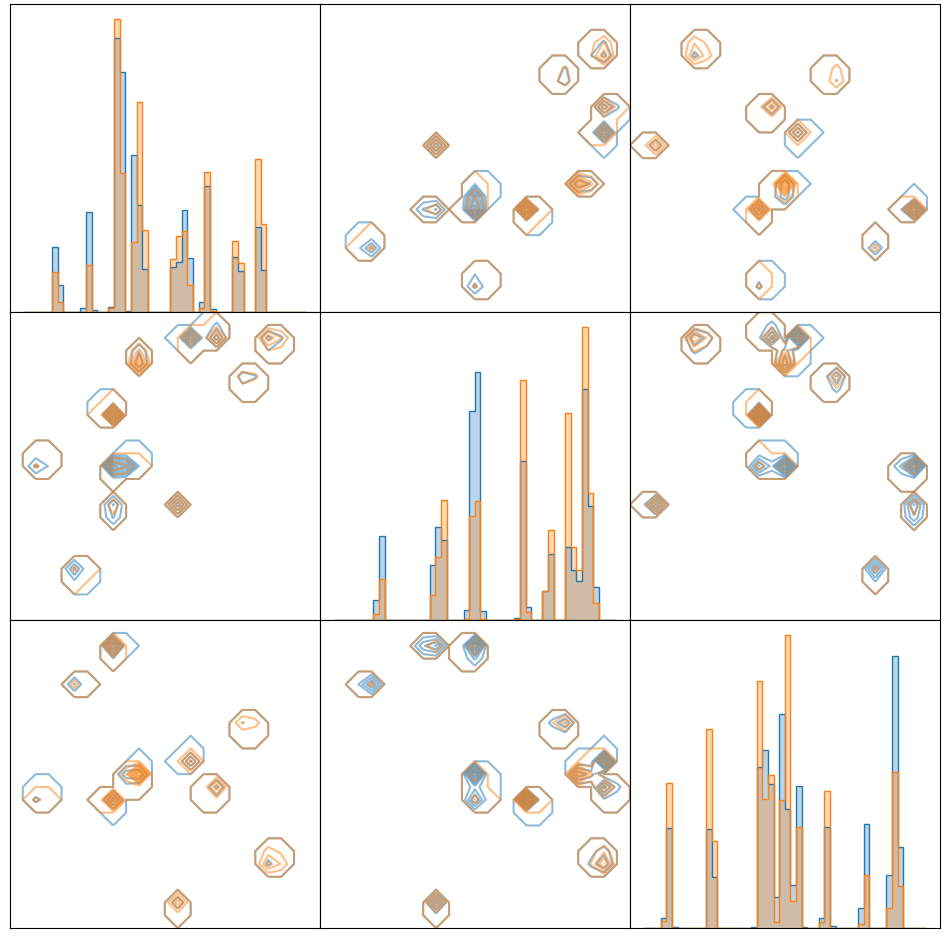}  
\end{subfigure}
\begin{subfigure}{.44\textwidth}
  \centering
  \includegraphics[width=\linewidth]{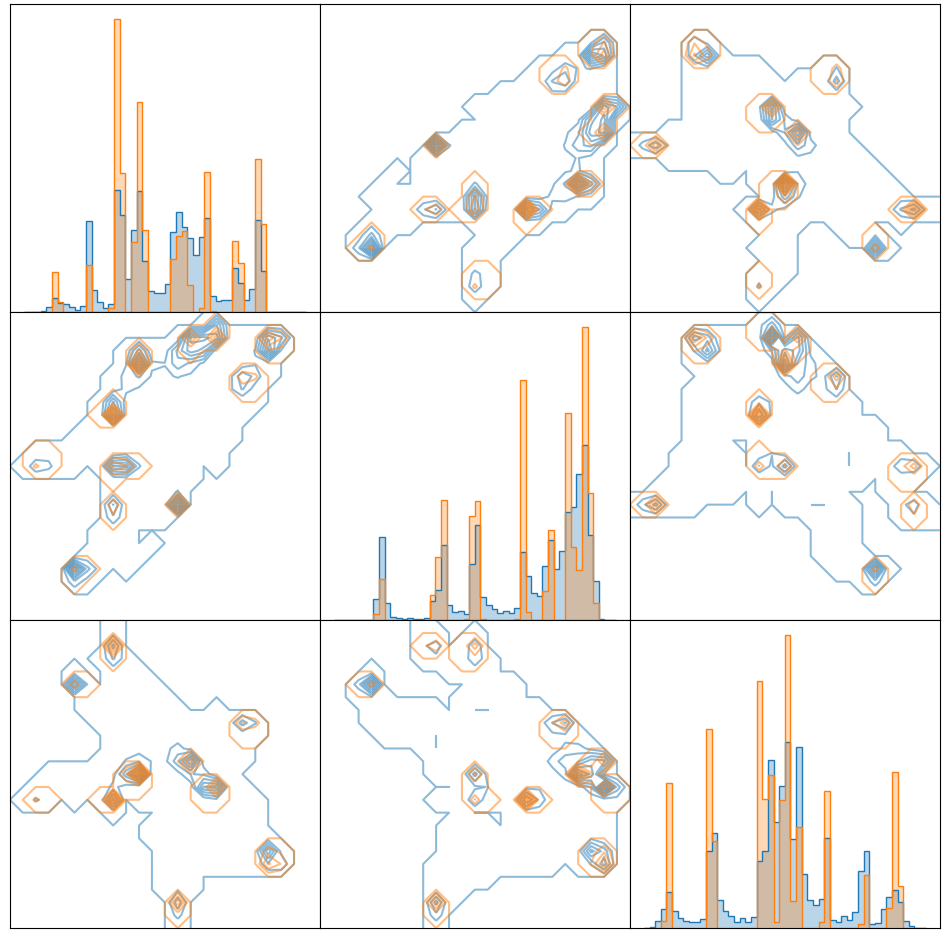}  
\end{subfigure}
\begin{subfigure}{.44\textwidth}
  \centering
  \includegraphics[width=\linewidth]{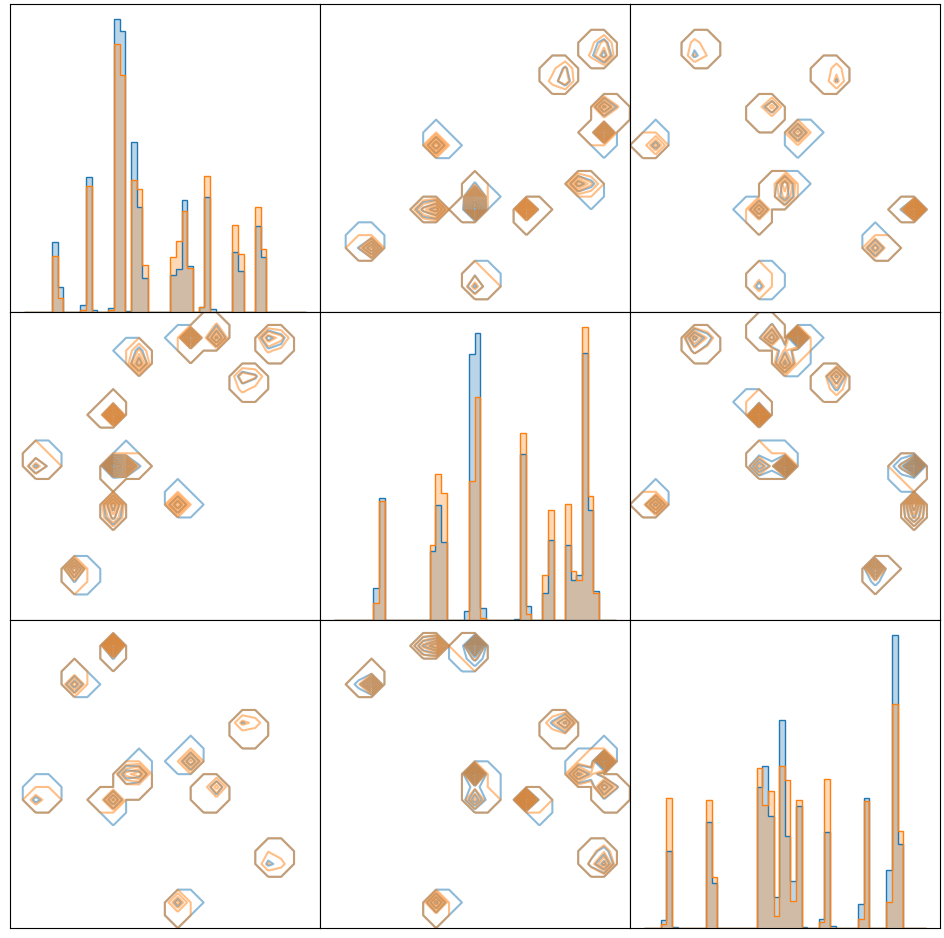}  
\end{subfigure}
\begin{subfigure}{.44\textwidth}
  \centering
  \includegraphics[width=\linewidth]{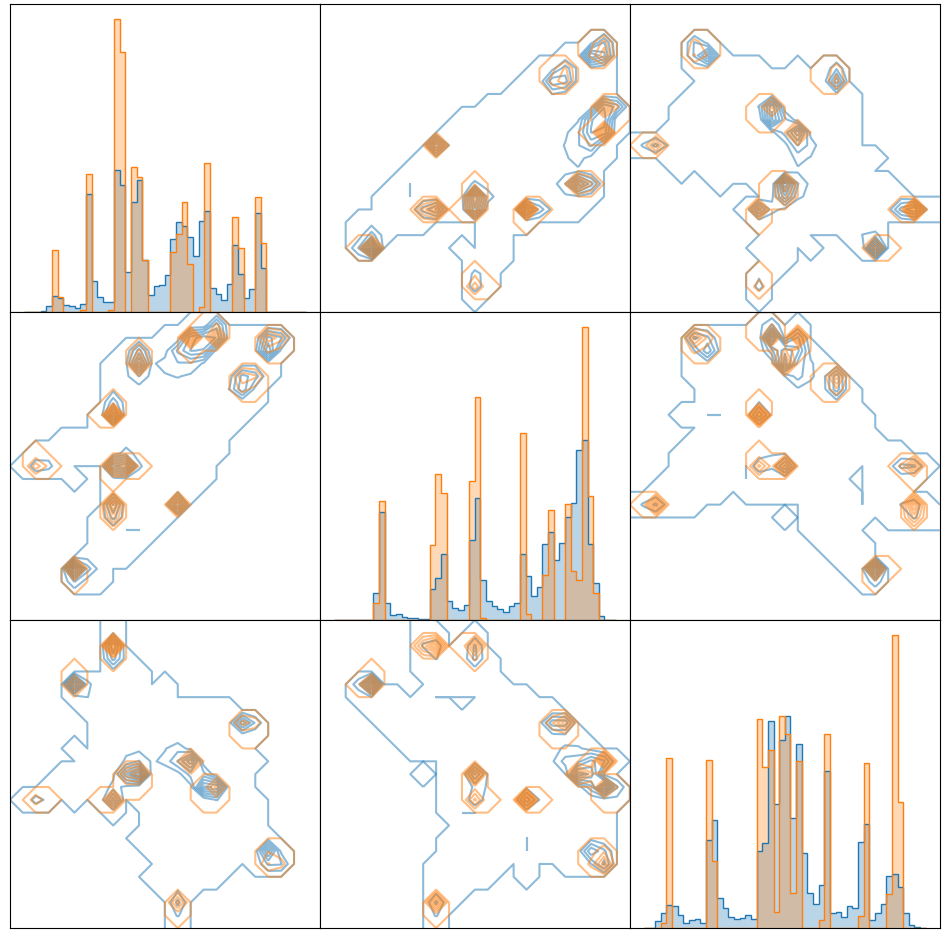}  
\end{subfigure}
\caption{Histograms of the first, $50$-th and $100$-th marginal of the ground truth (orange) and the posterior reconstructions using a conditional SNF (left, blue) and a conditional INN (right, blue) for
two different samples of $Y$.
On the diagonal we plot the one histograms
of the one-dimensional marginals, on the off-diagonal we plot the potentials of the two 
dimensional marginals.}
\label{fig:fig}
\end{figure}

\subsection{Example from Scatterometry}\label{sec:scatterometry}
Next, we apply the conditional SNFs to a real world inverse problem, where the nonlinear forward operator $F\colon \R^3 \rightarrow \R^{23}$ describes the diffraction of monochromatic lights on line gratings and is a non-destructive technique to determine the structures of photo masks \cite{heidenreich2015bayesian,scatter}. 
The parameters in $x$-space describe the geometry of the line gratings and $Y = F(X) + \eta$ the 
diffraction pattern. 
The inverse problem consists now in recovering the geometry  given an observation $y$. 
We assume that the noise $\eta$ is mixed additive and multiplicative Gaussian noise, i.e., $\eta=a F(X) \eta_1+b\eta_2$, where $\eta_1,\eta_2\sim \mathcal{N}(0,I)$ and $a,b>0$ are some constants. 
Then, the conditional distribution $P_{Y|X=x}$ is given by $\mathcal N \left(F(x),(a^2 F(x)^2+b^2) \, I \right)$
Here $b$ represents the strength of background noise, while $a$ controls the strength of fluctuations depending on the forward model. 

In scatterometry, the forward operator $F$ is known from physics, but its evaluation requires the 
solution of some partial differential equation \cite{scatter}, which is slow and computationally costly.
Therefore, we sample $N=10000$ data points $x_i$ uniformly from $[-1,1]^3$ and evaluate the exact forward
operator $F(x_i)$, $i=1,...,N$ (which is computationally costly).
Then, we approximate $F$ by a feed-forward neural network $\tilde F$ with $3$ hidden layers and $256$ 
neurons in each hidden layer by minimizing the loss function
$$
\sum_{i=1}^N \|\tilde F(x_i)-F(x_i)\|^2.
$$
Throughout this section, we will use this approximation $\tilde F$ as our forward operator $F$.

Since we do not have any prior information about the parameters $x$, we choose the prior distribution
$P_X$ as the uniform distribution on $[-1,1]^3$.
For the conditional SNF we assumed that $P_X$ has a strictly positive density $p_X$.
To fulfill this assumption, we relax the probability density function of the uniform distribution
for $x=(x_1,x_2,x_3)\in\R^3$ by
$$
p_X(x)\coloneqq q(x_1)q(x_2)q(x_3), \quad q(x)\coloneqq \begin{cases}\frac{\alpha}{2\alpha+2}\exp(-\alpha(-1-x)),&$for $x<-1,\\
\frac{\alpha}{2\alpha+2},&$for $x\in[-1,1],\\\frac{\alpha}{2\alpha+2}\exp(-\alpha(x-1)),&$for $x>1,\end{cases}
$$
where $\alpha\gg 0$ is some constant. 
Note that for large $\alpha$ and $x_i$ outside of $[-1,1]$ the function $q$ becomes small such that $p_X$ 
is small outside of $[-1,1]^3$.
In our numerical experiments, we choose $\alpha=1000$.
Now the density of the posterior distribution $P_{X|Y=y}$ can be evaluated up to a multiplicative 
constant by Bayes' theorem as
$
p_{X|Y=y}(x)\propto p_{Y|X=x}(y)p_X(x).
$
We can assume for this experiment that $a = 0.2$ and $b = 0.01$.

\paragraph{Model parameters and training}
We train a conditional SNF with $T=8$ layers similarly to the previous example.
The layers $\mathcal K_t$ for $t=1,3,5,7$ are deterministic layers with
conditional INNs with $L=1$ layers, where the subnetworks has two hidden
layers with $64$ neurons in each hidden layer. We do not use any permutations.
The layers $\mathcal K_t$ for $t=2,4,6,8$ consist of $10$ MCMC steps using the kernel 
$Q_t$ as defined in \eqref{eq_MCMC_step_MH}.
Here, we set $\sigma=0.4$.

As a comparison, we also implement a conditional INN  with $L=4$ layers, 
where each subnetwork has two hidden layers with $64$ neurons in each hidden layer.
Note that this conditional INN has exactly the same number of parameters 
as the conditional SNF.
We train both networks using the Adam optimizer with a batch size of $1600$ and a learning rate of $10^{-3}$ 
for the loss function \eqref{KL-loss}.
For the SNF, we run $40$ epochs, which takes approximately $50$ seconds.
Since for the conditional INN it takes longer until the loss saturates, 
we train the conditional INN for $5000$ epochs, which takes approximately $8$ minutes.
Each epoch consists of $8$ steps of the Adam optimizer.

After the training, we approximate the posterior $P_{X|Y=y}$ by the measure $P_{X_T|Y=y}$ for the
conditional SNF and by the measure $\mathcal T(y,\cdot)_\#P_Z$ for the conditional INN.

\paragraph{Quality measure}
Since we do know the ground truth, we use samples generated by the 
Metropolis-Hastings algorithm as a baseline.
To generate a sample from $P_{X|Y=y}$, we run $1000$ steps of the Metropolis Hastings algorithm, i.e., 
apply $1000$ times the kernel $\mathcal K_{\mathrm{MH}}$ from \eqref{eq_MH_kernel}, where the density 
$p$ in \eqref{eq_MH_kernel} is replaced by $p_{X|Y=y}$.

To evaluate the quality of the approximation $P_{X_T|Y=y}$ 
generated by the conditional SNF of $P_{X|Y=y}$, we approximate 
$\mathrm{KL}(P_{X|Y=y},P_{X_T|Y=y})$ as follows: 
Let $\mathcal X=\{x_i:i=1,...,N\}$, $N=160000$ be samples of $P_{X_T|Y=y}$ generated by the  conditional SNF and let
$\tilde{\mathcal{X}}= \{\tilde x_i:i=1,...,N\}$ be samples from $P_{X|Y=y}$ generated by the
Metropolis-Hastings algorithm.
We split our domain $[-1,1]^3$ into $50^3$ cubes $(C_{ijk})_{i,j,k=1}^{50}$ of size $\tfrac{1}{25}$.
Then, we approximate $\mathrm{KL}(P_{X|Y=y},P_{X_T|Y=y})$ by 
$\mathrm{KL}(\mu_{\mathrm{MH}},\mu_{\mathrm{SNF}})$, where $\mu_{\mathrm{MH}}$ and $\mu_{\mathrm{SNF}}$
are the discrete measures
$$
\mu_\mathrm{MH}(i,j,k)\coloneqq\frac{|\tilde{\mathcal X}\cap C_{ijk}|}{N},\quad 
mu_\mathrm{SNF}(i,j,k) \coloneqq \frac{|\mathcal X\cap C_{ijk}|}{N}.
$$
We approximate the the KL divergence $\mathrm{KL}(P_{X|Y=y},\mathcal T_\#P_Z)$ analogously.

\paragraph{Results}
We approximate the posterior $P_{X|Y=y_i}$ by $P_{X_T|Y=y_i}$ and by $\mathcal T(y_i,\cdot)_\#P_Z$ 
for $100$ i.i.d.\ samples $y_i$ of $Y$. Then, the averaged (approximated) Kullback-Leibler divergences 
$\mathrm{KL}(P_{X|Y=y_i},P_{X_T|Y=y_i})$ and 
$\mathrm{KL}(P_{X|Y=y_i},\mathcal T(y_i,\cdot)_\#P_Z)$ over $100$ observations $y$
is given by $0.58\pm 0.20$ and $0.84\pm0.29$, respectively.

To check whether $50^3$ cubes suffice to approximate the Kullback-Leibler divergence, we also took the same models and 
evaluated the average KL over 100 observations $y$ with $75^3$ cubes and $540000$ samples. 
Here, we obtained a mean of $0.54$ for the conditional SNF and $0.78$ for the conditional INN, which is close to the 
results for $50^3$ bins.

For two exemplar values $y_i$, we plotted the histograms of the samples generated by
Metropolis Hastings, the conditional SNF and the conditional INN in Figure~\ref{fig:scattero}.
We observe that the reconstruction using the conditional SNF fits the true posterior generated
by the Metropolis-Hastings algorithm better than the reconstruction using the conditional INN, even though
the training time of the conditional INN was much longer.
In particular, the conditional INN has problems to separate the different modes
in the multimodal second component of the posterior.

\begin{figure}
\centering
\begin{subfigure}[t]{.39\textwidth}
  \centering
  \includegraphics[width=\textwidth]{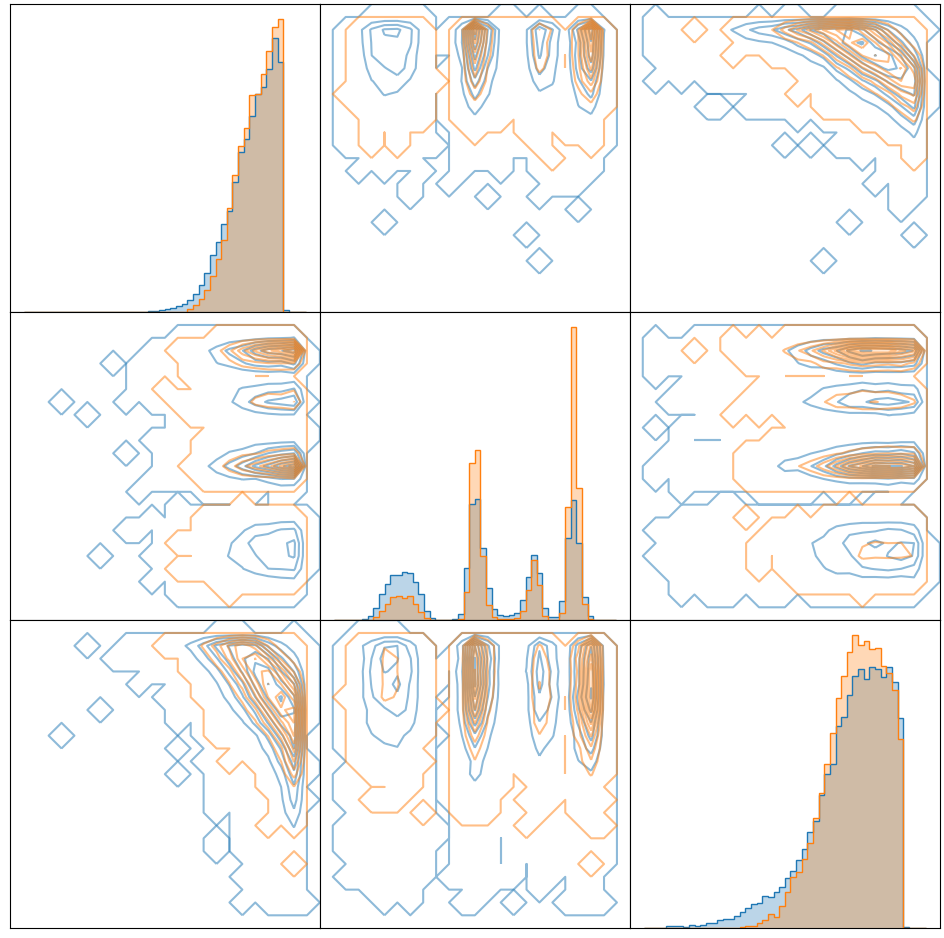}  
\end{subfigure}
\begin{subfigure}[t]{.39\textwidth}
  \centering
  \includegraphics[width=\textwidth]{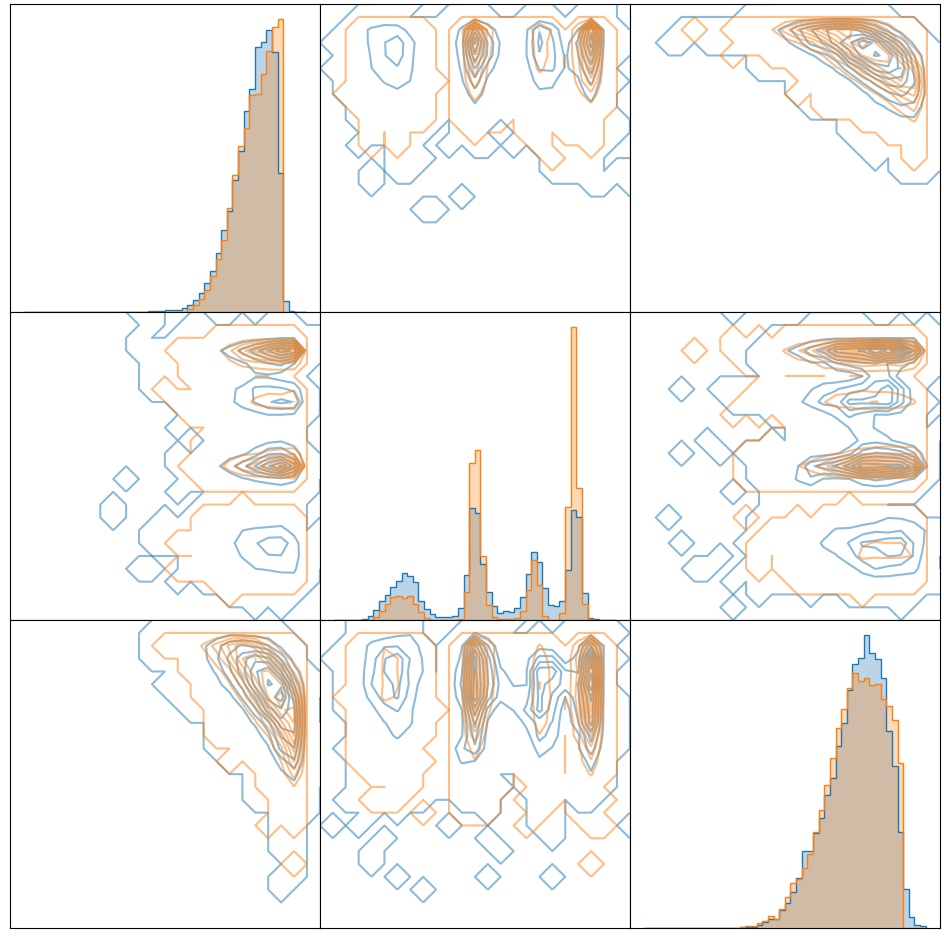}  
\end{subfigure}
\begin{subfigure}[t]{.39\textwidth}
  \centering
  \includegraphics[width=\textwidth]{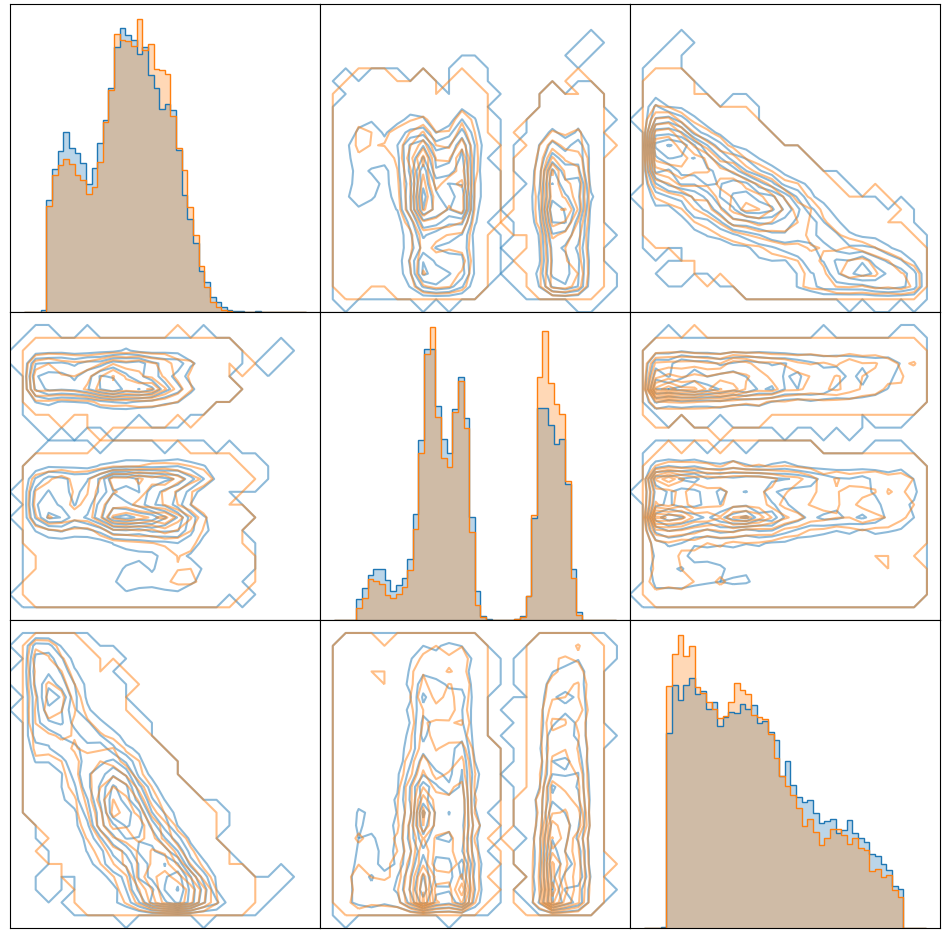}  
\end{subfigure}
\begin{subfigure}[t]{.39\textwidth}
  \centering
  \includegraphics[width=\textwidth]{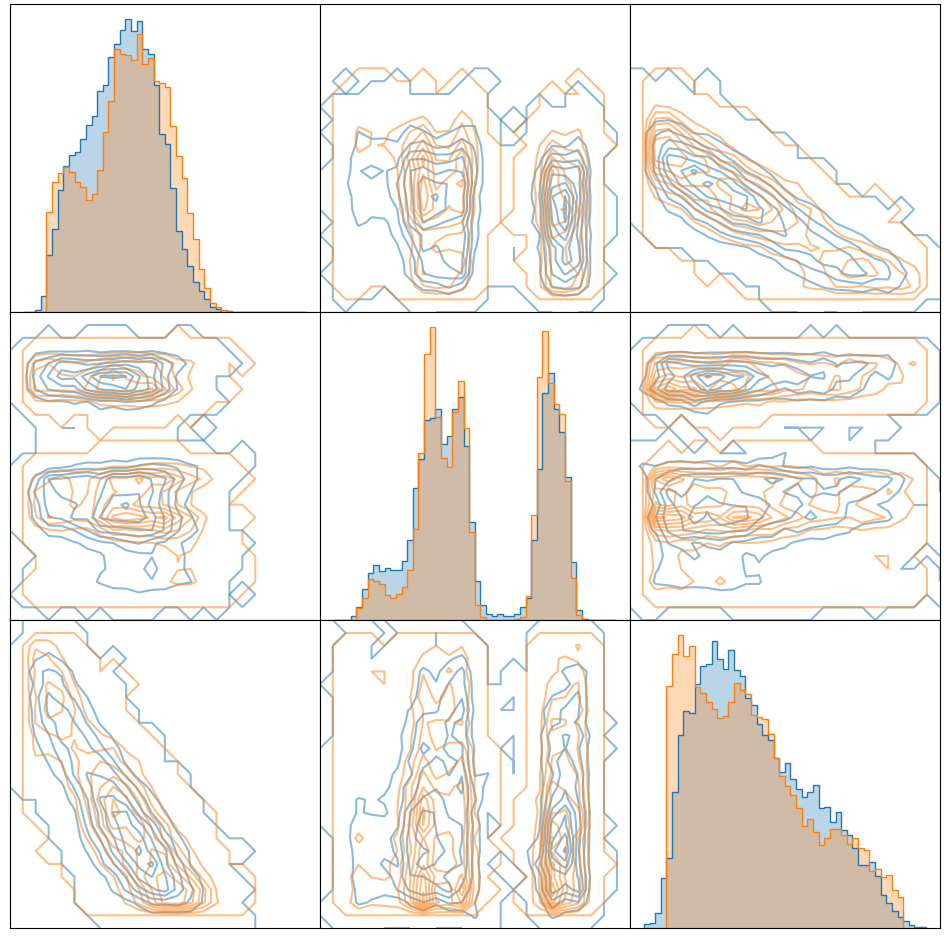}  
\end{subfigure}

\caption{Histograms of the posterior reconstructions using a conditional SNF (left, blue), 
a conditional INN (right, blue) and MCMC (orange) for $2$ different samples from $Y$. On the diagonal we plot the one histograms
of the one-dimensional marginals, on the off-diagonal we plot the potentials of the two 
dimensional marginals.}

\label{fig:scattero}
\end{figure}

\section{Conclusions} \label{sec:conclusions}

We have seen how SNFs can be approached via Markov chains with the advantage
that also distributions without densities can be handled in a sound way using
general Markov kernels and  Radon-Nikodym derivatives.
We showed how the concept can be generalized to  conditional SNFs for sampling from 
posterior distributions in inverse problems and gave first numerical examples.
As future work, we would like to apply conditional SNFs to inverse problems in imaging.
Since the prior distribution $P_X$ has to be known,
one possibility could be to learn in a first step the prior distribution by an INN 
as e.g.\ in \cite{CBDJ2019,DSB2017} or from a patch distribution \cite{HHR2021} and to tackle 
the inverse problem afterwards via a conditional SNF.
It would also be interesting to incorporate other layers as variational autoencoders
or diffusion flows 
and to investigate 
the interplay of both losses of the convex combination \cite{ZZH2021}.

\appendix

\section{Invertible Neural Networks} \label{app:INN}
In this paper, we focus on diffeomorphisms $\mathcal T_t$ determined by 
an INN with the architecture proposed in \cite{AKRK2019}. 
In the following , we skip the index in the description of the INN $\mathcal T_t$,
but keep in mind that different INNs, i.e. different parameters, are learned for different Markov chain indices $t$.
Our INN $\mathcal T = \mathcal T(\cdot; \theta)$ with parameters $\theta$ is a composition
\begin{align} \label{eq_net}
  \mathcal T =  T_{L} \circ P_{L} \circ \dots \circ  T_{1} \circ P_{1},
\end{align}
of permutation matrices $P_\ell$ and invertible mappings $T_\ell$ of the form
\begin{equation*}  \label{eq_DefBlock}
  T_{\ell} (\xi_1,\xi_2) 
  = (x_1,x_2) 
  \coloneqq \left(\xi_1 \, \mathrm{e}^{s_{\ell,2}(\xi_2)} + t_{\ell,2}(\xi_2),\, 
          \xi_2 \, \mathrm{e}^{s_{\ell,1}(x_1)}   + t_{\ell,1}(x_1)   \right)
\end{equation*}
for some splitting $(\xi_1,\xi_2) \in \R^{d}$ with $\xi_i \in \R^{d_i}$, $i=1,2$.  Here
$s_{\ell,2}, t_{\ell,2}: \R^{d_2} \to \R^{d_1}$ 
and 
$s_{\ell,1}, t_{\ell,1}: \R^{d_1} \to \R^{d_2}$ 
are ordinary feed-forward neural networks. 
The parameters $\theta$ of
$\mathcal T(\cdot;\theta)$ are specified by the parameters of these subnetworks.
The inverse of the network layers $T_l$ is analytically given by
\begin{equation*}\label{eq_DefInvBlock}
    T_\ell^{-1}(x_1,x_2) 
    = (\xi_1,\xi_2) 
		\coloneqq \left( \big(x_1 - t_{\ell,2}(\xi_2) \big) \,\mathrm{e}^{-s_{\ell,2}(\xi_2)},\, 
             \big(x_2 - t_{\ell,1}(x_1)   \big) \,\mathrm{e}^{-s_{\ell,1}(x_1)} \right)
\end{equation*}
and does not require an inversion of the feed-forward subnetworks. Hence the whole
map $\mathcal T$ is invertible and allows for a fast evaluation of both forward and
inverse map.

In our loss function, we will need the log-determinant of $\mathcal T$. 
Fortunately this can be simply computed by the following  considerations:
since $T_\ell = T_{2,\ell} \circ T_{1,\ell}$ with 
\begin{align*}
    T_{1,\ell}(\xi_1,\xi_2) = (x_1,\xi_2) 
		&\coloneqq
    \left(
		\xi_1\mathrm{e}^{s_{\ell,2}(\xi_2)} + t_{\ell,2} (\xi_2), \xi_2
    \right),
		\\
		T_{2,\ell}(x_1,\xi_2) = (x_1,x_2) 
		&\coloneqq 
		\left(x_1,
    \xi_2\mathrm{e}^{s_{\ell,1}(x_1)} + t_{\ell,1} (x_1) 
		\right), 
\end{align*}
we have 
\begin{align*}
		  \nabla T_{1,\ell}(\xi_1,\xi_2) 
			&= 
		  \begin{pmatrix}
		    \mathrm{diag} \left( \mathrm{e}^{s_{\ell,2}(\xi_2)} \right) 
				& \mathrm{diag} \left( \nabla_{\xi_2} 
				\left( \xi_1 \mathrm{e}^{s_{\ell,2}(\xi_2)} + t_{\ell,2} (\xi_2) \right) 
				\right)\\
		    0 & I_{d_2}
		  \end{pmatrix}
    \end{align*}
    so that 
		$ \det \nabla T_{1,\ell}(\xi_1,\xi_2) =  \prod_{k=1}^{d_1}
    \mathrm{e}^{\left( s_{\ell,2}(\xi_2)\right)_k} $ 
		and similarly for 
		$\nabla T_{2,\ell}$.  
		Applying the chain rule in \eqref{eq_net} and noting that the
    Jacobian of $P_\ell$ is just $P_\ell$ with $|\det P_\ell|=1 $, 
		and that $\det (A B) = \det(A) \det(B)$, we conclude 
    \begin{align*}
		  \log( |\det \left(\nabla \mathcal T(\xi) \right)|)
		  = \sum_{\ell = 1}^L \left( \operatorname{sum}\left(s_{\ell,2} \left( (P_\ell \xi^{\ell} )_2 \right)\right) 
		  + \operatorname{sum}\left(s_{\ell,1}\left( (T_{1,\ell} P_\ell \xi^{\ell} )_1 \right) \right)\right),
    \end{align*}
    where $\operatorname{sum}$ denotes the sum of the components of the respective vector,
    $\xi^{1} \coloneqq \xi$ and $\xi^{\ell} = T_{\ell-1} P_{\ell-1}
    \xi^{\ell-1}$, $\ell = 2,\ldots,L$.

\section{Proof of Lemma~\ref{prop_KL_MC_paths}} \label{app:proofs}
We show the first equation. The second one can be proven analogously 
using that for a Markov chain $(X_0,...,X_T)$ the time-reversal $(X_T,...,X_0)$ 
is again a Markov chain.
For $f$ in \eqref{def_f}, we consider the measure 
$\mu\coloneqq f\, P_{(X_0,...,X_T)}$ 
and show that $\mu=P_{(Y_0,...,Y_T)}$. 
For this purpose, let $A_0\times\cdots\times A_T\subseteq (\R^d)^{T+1}$ be a measurable rectangle. 
Then, using that by definition of a regular conditional distribution it holds 
$P_{(X_T,X_0,...,X_{T-1})}=P_{X_T}\times P_{(X_0,...,X_{T-1})|X_T}$, 
we get 
\begin{align*}
&\mu(A_0\times\cdots\times A_T)
=\int\limits_{A_0\times\cdots\times A_T}\frac{p_{Y_T}(x_T)}{p_{X_T}(x_T)}
\prod_{t=1}^T f_t(x_{t-1},x_{t}) \dx P_{(X_0,...,X_T)}(x_0,...,x_T)\\
&=\int\limits_{A_T}\frac{p_{Y_T}(x_T)}{p_{X_T}(x_T)}\int\limits_{A_0\times\cdots\times A_{T-1}} 
\prod_{t=1}^T f_t(x_{t-1},x_{t}) \dx 
P_{(X_0,...,X_{L-1})|X_T=x_T}(x_0,...,x_{T-1}) \dx P_{X_T}(x_T).
\end{align*}
Using that $\frac{p_{Y_T}}{p_{X_T}}$ is the Radon-Nikodym derivative of $P_{Y_T}$ with respect to $P_{X_T}$, we obtain 
\begin{align*}
&\quad \mu(A_0\times\cdots\times A_T)\\
&=\int\limits_{A_T}\int\limits_{A_0\times\cdots\times A_{T-1}} \prod_{t=1}^T f_t(x_{t-1},x_{t}) 
\dx P_{(X_0,...,X_{T-1})|X_T=x_T}(x_0,...,x_{T-1}) \dx P_{Y_T}(x_T).
\end{align*}
Thus it suffices to prove for $l=1,...,T$ that
\begin{align}
I_l &\coloneqq
\int\limits_{A_0\times\cdots\times A_{l-1}} \prod_{t=1}^l f_t(x_{t-1},x_{t}) \dx P_{(X_0,...,X_{l-1})|X_l=x_l}(x_0,...,x_{l-1})
\nonumber
\\
&=P_{(Y_0,...,Y_{l-1})|Y_l=x_l}(A_0\times\cdots\times A_{l-1}). \label{eq_induction}
\end{align}
since then
\begin{align*}
 \mu(A_0\times\cdots\times A_T)
&=
\int\limits_{A_T} P_{(Y_0,...,Y_{T-1})|Y_T=x_T}(A_0\times\cdots\times A_{T-1}) \dx P_{Y_T}(x_T)
\\
&=P_{(Y_0,...,Y_T)}(A_0\times\cdots\times A_T)
\end{align*}
and we are done.
We show \eqref{eq_induction} by induction. 
Since 
$f_1(\cdot,x_1)$ 
is the Radon-Nikodym derivative $\frac{d P_{Y_{0}|Y_{1}=x_{1}}}{d P_{X_{0}|X_{1}=x_{1}}}$, 
we have for $l=1$ that
\begin{align*}
\int\limits_{A_0} f_1(x_{0},x_{1}) \dx P_{X_0|X_1=x_1}(x_0)
&=\int\limits_{A_0}1 \,
\dx P_{Y_0|Y_1=x_1}(x_0)=P_{Y_0|Y_1=x_1}(A_0).
\end{align*}
Now assume that \eqref{eq_induction} is true for $l-1$. 
Then it holds
\begin{align*}
I_l &=
\int\limits_{A_{l-1}}\int\limits_{A_0\times\cdots\times A_{l-2}} \prod_{t=1}^{l-1} f_t(x_{t-1},x_{t}) 
\dx P_{(X_0,...,X_{l-2})|X_{l-1}=x_{l-1},X_l=x_l}(x_0,...,x_{l-2}) \\
& \quad \cdot f_l(x_{l-1},x_{l}) 
\dx P_{X_{l-1}|X_l=x_l}(x_{l-1}), \nonumber
\end{align*}
and by the Markov property of $(X_T,...,X_0)$ and the definition of $f_l$ further
 \begin{align*}
 I_l
&=\int\limits_{A_{l-1}}\, I_{l-1} \, \dx P_{Y_{l-1}|Y_l=x_l}(x_{l-1})\\
&=\int\limits_{A_{l-1}} P_{(Y_0,...,Y_{l-2})|Y_{l-1}=x_{l-1}}(A_0\times\cdots\times A_{l-2})\dx P_{Y_{l-1}|Y_l=x_l}(x_{l-1})\\
&=\int\limits_{A_{l-1}} P_{(Y_0,...,Y_{l-2})|Y_{l-1}=x_{l-1},Y_{l}=x_l}(A_0\times\cdots\times A_{l-2}) \dx P_{Y_{l-1}|Y_l=x_l}(x_{l-1})\\
&=P_{(Y_0,...,Y_{l-1})|Y_{l}=x_l}(A_0\times\cdots\times A_{l-1}).
\end{align*}
Since the measurable rectangles are a $\cap$-stable generator of $\mathcal B(\R^d)$, this proves the claim.
\hfill $\Box$

\section{Deterministic and Stochastic Layers for Conditional SNFs}\label{sec_cond_layers}
\vspace{.1cm}

\noindent
\textbf{Deterministic layer:} 
Suppose that $X_t\coloneqq \mathcal T_t(Y,X_{t-1})$ for some measurable mapping 
$\mathcal T_t\colon\R^{\tilde d}\times\R^d \to \R^d$ 
such that $\mathcal T_t(y,\cdot)$ is a diffeomorphism for any $y\in\R^{\tilde d}$. 
Then we use the Markov kernels given by
$$\mathcal K_t(y,x,A) \coloneqq \delta_{\mathcal T_t(y,x)}(A), \quad \mathcal R_t(y,x,A) \coloneqq \delta_{\mathcal T_t^{-1}(y;x)}(A),$$
where $\mathcal T_t^{-1}(y;x)$ denotes the inverse of $\mathcal T_t(y,\cdot)$ at point $x$.

Such mappings $\mathcal T$ can be constructed by using conditional INNs which were 
introduced in \cite{AKLBRK2021,ALKRK2019} and are closely related to conditional GANs \cite{MK2018}.
A \textit{conditional INN}
is a neural network $\mathcal T=\mathcal T(\cdot,\cdot;\theta)\colon\R^{\tilde d}\times \R^d\to\R^d$ such that $\mathcal T(y,\cdot)\colon\R^d\to\R^d$ 
is by construction invertible for any $y\in\R^{\tilde d}$. 
In this paper, we use an adaption of the INN architecture \eqref{eq_net} to model a conditional INN similarly as 
in \cite{ALKRK2019}. 
More precisely, the network $\mathcal T(y,\cdot)$ is a composition
\begin{align} \label{eq_cnet}
  \mathcal T(y,\cdot) =  T_L(y,\cdot) \circ P_L \circ \dots \circ  T_1(y,\cdot) \circ P_1,
\end{align}
of permutation matrices $P_l$ and mappings $T_l\colon\R^{\tilde d}\times\R^d\to\R^d$ of the form
\begin{equation}
  \label{eq_cDefBlock}
  T_l(y,\xi_1,\xi_2) 
  = (x_1,x_2) 
  \coloneqq \left(\xi_1 \, \mathrm{e}^{s_{l,2}(y,\xi_2)} + t_{l,2}(y,\xi_2),\, 
          \xi_2 \, \mathrm{e}^{s_{l,1}(y,x_1)}   + t_{l,1}(y,x_1)   \right)
\end{equation}
for some splitting $(\xi_1,\xi_2) \in \R^{d}$ with $\xi_i \in \R^{d_i}$, $i=1,2$.  
As before
$s_{l,2}, t_{l,2}: \R^{\tilde d} \times \R^{d_2} \to \R^{d_1}$ 
and 
$s_{l,1}, t_{l,1}: \R^{\tilde d} \times \R^{d_1} \to \R^{d_2}$ 
are ordinary feed-forward neural networks 
and the parameters $\theta$ of
$\mathcal T(\cdot,\cdot;\theta)$ are specified by the parameters of these subnetworks.
Note that for any $y\in\R^{\tilde d}$ the mapping $T_l(y,\cdot)$ 
is by definition invertible and admits the analytical inverse
\begin{equation*}\label{eq_cDefInvBlock}
(T_l(y,\cdot))^{-1}(x_1,x_2) 
= (\xi_1,\xi_2) 
\coloneqq \left( \big(x_1 - t_{l,2}(y,\xi_2) \big) \,\mathrm{e}^{-s_{l,2}(y,\xi_2)},\, 
\big(x_2 - t_{l,1}(y,x_1)   \big) \,\mathrm{e}^{-s_{l,1}(y,x_1)} \right).
\end{equation*}
We obtain that $\mathcal T(y,\cdot)$ is invertible for any $y\in\R^{\tilde d}$. Further, both forward and inverse map can be 
computed very efficiently.
\vspace{0.4cm}

\noindent
\textbf{Langevin layer:} 
Let $P_t\colon\R^{\tilde d}\times \mathcal B(\R^d)\to[0,1]$ be a Markov kernel 
such that $P_t(y,\cdot)$ has the density $p_t^y\colon\R^d\to\R_{>0}$. 
Further, define $\xi_t\sim\mathcal N(0,I)$ and $u_t^y(x)=-\log(p_t^y(x))$. 
Then 
$$
X_t \coloneqq X_{t-1}-a_1\nabla u^y_t(X_{t-1})+a_2 \xi_t,
$$
with $a_1,a_2>0$ has the transition kernel $\mathcal K_t$  given by
\begin{equation*}\label{eq_langevin_cond}
\mathcal K_t(y,x,A) \coloneqq \mathcal N(A|x-a_1\nabla u_t^y(x),a_2^2 I).
\end{equation*}
Again, we use $\mathcal R_t=\mathcal K_t$ as reverse layer.
\vspace{0.4cm}

\noindent
\textbf{MCMC layer:} 
Let $P_t\colon\R^{\tilde d}\times \mathcal B(\R^d)\to[0,1]$ 
be a Markov kernel such that $P_t(y,\cdot)$ has the density $p_t^y\colon\R^d\to\R_{>0}$.
Further, let $Q_t\colon\R^{\tilde d}\times\R^d\times \mathcal B(\R^d)$ be a Markov kernel, such that
$
Q_t(y,x,\cdot)
$
admits the strictly positive density $q_t^y(\cdot|x)$.
Define by $X'_t$ a random variable independent of $(X_0,...,X_{t-2})$ such that
$$
P_{X_{t-1},X'_t|Y=y}=P_{X_t|Y=y}\times Q_t(y,\cdot,\cdot)
$$
and assume that
$$
X_t\coloneqq 1_{[U,1]} \big( \alpha_t^Y( X_{t-1},X_t') \big) \, X_t'
+
1_{[0,U]}  \big( \alpha_t^Y( X_{t-1}, X_t' ) \big) \, X_{t-1}
$$
where $\alpha_t^y(x,w) \coloneqq \left\{\min(1,\tfrac{p^y_t(w)q_t^y(x|w)}{p^y_t(x)q_t^y(w|x)})\right\}$.
Then the transition kernel $\mathcal K_t$ is given by
\begin{equation}\label{eq_conditional_MCMC_kernel}
\mathcal K_t(y,x,A) \coloneqq \int_A q_t^y(w|x) \alpha_t^y(x,w)
\dx w
+
\delta_x(A) \int_{\R^d} q_t^y(w|x)(1-\alpha_t^y(x,w))) \dx w.
\end{equation}
Note, that for fixed $y\in\R^{\tilde d}$, the kernel $\mathcal K_t(y,\cdot,\cdot)$ 
is the Metropolis-Hastings kernel with respect to the density $p^y_t$ and Markov kernel $Q_t(y,\cdot,\cdot)$.
Analogously to Remark~\ref{rem_MCMC_step_kernels}, we consider in our numerical examples the kenels
\begin{equation}\label{eq_MCMC_step_MH}
Q_t(y,x,\cdot)=\mathcal N(x,\sigma^2 I),\qquad q_t^y(\cdot|x)=\mathcal N(\cdot|x,\sigma^2 I)
\end{equation}
and
\begin{equation}\label{eq_MCMC_step_MALA}
Q_t(y,x,\cdot)=\mathcal N(x-a_1\nabla u_t^y(x),a_2^2 I),\qquad q_t^y(\cdot|x)=\mathcal N(\cdot|x-a_1\nabla u_t^y(x),a_2^2 I).
\end{equation}
As in the non-conditional case we use $\mathcal R_t=\mathcal K_t$ as reverse layer.

\section*{Acknowledgment}
 The funding by the German Research Foundation (DFG) within 
 the projects STE 571/16-1 and 
within the project of the DFG-SPP 2298 ,,Theoretical Foundations of Deep Learning'' 
is gratefully acknowledged.
Many thanks to S. Heidenreich from the Physikalisch-Technische Bundesanstalt (PTB) for
providing the scatterometry data which we used for training the forward model 
and for fruitful discussions on the corresponding example. 
P. H.  thanks J. K\"ohler for helpful discussions.

\bibliographystyle{abbrv}
\bibliography{ref}
\end{document}